\newcolumntype{L}[1]{>{\raggedright\let\newline\\\arraybackslash\hspace{0pt}}m{#1}}
\newcolumntype{C}[1]{>{\centering\let\newline\\\arraybackslash\hspace{0pt}}m{#1}}
\newcolumntype{R}[1]{>{\raggedleft\let\newline\\\arraybackslash\hspace{0pt}}m{#1}}
\newcolumntype{Z}[1]{>{\let\newline\\\arraybackslash\hspace{0pt}}m{#1}}
\newcolumntype{P}[1]{>{\centering\arraybackslash}p{#1}}
\DeclareMathOperator*{\argmin}{argmin}
\newtheorem{definition}{Definition}
\newtheorem{proposition}{Proposition}
\title{Robust AI-Generated Text Detection by Restricted Embeddings}
\author{
  \textbf{Kristian Kuznetsov\textsuperscript{1,4}},
  \textbf{Eduard Tulchinskii\textsuperscript{1,4}},
  \textbf{Laida Kushnareva\textsuperscript{1}},
  \\
  \textbf{German Magai\textsuperscript{2,3}},
  \textbf{Serguei Barannikov\textsuperscript{4,5}},
  \textbf{Sergey Nikolenko\textsuperscript{6,7}},
  \textbf{Irina Piontkovskaya\textsuperscript{1}},
\\
  \textsuperscript{1}AI Foundation and Algorithm Lab, Russia;
  \\
  \textsuperscript{2}HSE University, Russia;
  \textsuperscript{3}Noeon Research, Japan;
  \\
  \textsuperscript{4}Skolkovo Institute of Science and Technology, Russia;
  \textsuperscript{5}CNRS, Université Paris Cité, France;
  \\
  \textsuperscript{6}ISP RAS Research Center for Trusted Artificial Intelligence, Moscow, Russia;\\
  \textsuperscript{7}St. Petersburg Department of the Steklov Institute of Mathematics, Russia
\\
}
\begin{document}

\maketitle

\begin{abstract}
Growing amount and quality of AI-generated texts makes detecting such content more difficult. In most real-world scenarios, the domain (style and topic) of generated data and the generator model are not known in advance. In this work, we focus on the robustness of classifier-based detectors of AI-generated text, namely their ability to transfer to unseen generators or semantic domains. We investigate the geometry of the embedding space of Transformer-based text encoders and show that clearing out harmful linear subspaces helps to train a robust classifier, ignoring domain-specific spurious features. We investigate several subspace decomposition and feature selection strategies and achieve significant improvements over state of the art methods in cross-domain and cross-generator transfer. Our best approaches for head-wise and coordinate-based subspace removal increase the mean out-of-distribution (OOD) classification score by up to 9\% and 14\% in particular setups for RoBERTa and BERT embeddings respectively. We release our code and data\footnote{https://github.com/SilverSolver/RobustATD}.
\end{abstract}

\section{Introduction}

The proliferation of generative AI leads to an explosion in AI-generated content. Large language models (LLMs) can produce text that is very similar to human-written.
However, AI-generated text can be used for malicious purposes, which leads to the \emph{artificial text detection} (ATD) problem: has a given text or image been created by an AI model or a human?
Existing approaches for artificial text detection can be divided into \emph{score-based} and \emph{classifier-based}. The former aim to identify and measure features that distinguish artificial text from real; e.g., generated text may exhibit statistical artifacts due to the specific generation process used by a language model \cite{gehrmann2019gltr}, the difference may lie in perplexities measured by another language model \cite{solaiman2019release}, curvature of the probability function \cite{mitchell2023detectgpt}, or intrinsic dimensionality of contextualized representations \cite{tulchinskii2023intrinsic}. However, score-based methods often rely on prior knowledge about a specific generator and/or semantic domain, and known traces may be easy to remove, e.g., by paraphrasing the text \cite{krishna2023paraphrasing}. One notable exception is the intrinsic dimension feature for text content, shown to be robust to domain transfer and paraphrasing \cite{tulchinskii2023intrinsic}, but its overall detection quality is relatively modest. 

Supervised classification methods %
show almost perfect in-domain detection quality, but fail to generalize to unseen text topics and writing styles \cite{wang-etal-2024-m4, tulchinskii2023intrinsic}. The choice of training data, both artificial and generated, is crucial for successful out-of-distribution (OOD) transfer. In general, while usually there exist features that can distinguish between natural and artificial subsets of the training set, the classifier may lock into dataset-specific spurious differences and hence generalize poorly. It is hard to say in advance if a classifier trained on a given dataset will generalize well to new unseen generators and data sources. Previous approaches to OOD detection for ATD include UID-based detectors \cite{venkatraman2023gpt} and domain adversarial training \cite{bhattacharjee2024eagle}, but most of these methods are very data-intensive \cite{wang2024semeval}.

In this work, we aim to improve supervised classification by ignoring spurious features to enhance cross-distribution robustness, training on small number of domains or generator models. Namely, we focus on methods %
of extracting \textit{residual subspaces} and deleting information from embeddings.

In many applications, retaining only important dimensions of high-dimensional data while treating projections onto less loaded subspaces as residual noise can benefit downstream tasks. However, for tasks such as OOD detection the principal components of a dataset may be the least useful. \citet{kamoi2020mahalanobis} 
found that nullifying the first (least important) principal components in the embedding space fine-tuned on in-distribution (ID) data enhances OOD detection quality; this is known as the partial Mahalanobis distance. \citet{podolskiy2021revisiting} conducted similar analysis for Transformer-based text classifiers and found that ID data has orthogonal classes and lies on a unit sphere in a low-dimensional space. The main difference between ID and OOD data lies in the residual subspace, hence the partial Mahalanobis distance performs well in OOD detection.

It is important to note that not all neural networks learn useful residual subspaces for a given dataset; e.g., \citet{podolskiy2021revisiting} and \citet{ren2021simple} find that on text, CNN classifiers learn representations where components with low singular values contain too much information about ID data, making it difficult to distinguish OOD examples.

In this work, we apply similar techniques to artificial text detection (ATD). Distribution shift, with variations in text styles, topics, and new generation models, is a major challenge for ATD. Supervised classifiers, even performing well on validation datasets, struggle in realistic settings, where the domain and model of the AI-written text are unknown. To address this, we first show that training a classifier on some residual subspace obtained by coordinate removal or layer pruning may significantly enhance ATD robustness under domain and model shift.
Next, we show that controllable subspace removal can improve robustness, while also providing us with interpretable information about AI-written texts and domains. In particular, we use recent advances in concept erasure \cite{belrose2023leace}, experimenting with erasing semantic and syntactic concepts based on probing tasks by \citet{conneau-etal-2018-cram}; we show that some concepts are harmful for cross-domain and cross-model transfer.%

Our primary contributions are: (i) a first application of the residual subspace approach for robust ATD; we show that restricting the detector to a residual subspace increases cross-topic and cross-model robustness with especially significant improvements on the most difficult samples; (ii) analysis of different \emph{residual decomposition} techniques, such as nullifying head-wise subspaces in intermediate data representations and concept erasure; (iii) analysis of applicability of our methods with different encoder- or decoder-based backbone models. Besides, we create and release an extension for one of the datasets with recent generating model GPT-4-o on three domains. 
Below, Section~\ref{sec:related} surveys related work,  Section~\ref{sec:methods} describes the proposed methods, Section~\ref{sec:data} introduces the datasets, Section~\ref{sec:eval} presents a comprehensive experimental evaluation, and Section~\ref{sec:concl} concludes the paper.

\section{Related Work}\label{sec:related}

\textbf{Linear subspaces in Transformer-based models} are known to represent concepts.
\citet{hernandez-andreas-2021-low} studied low-dimensional subspaces that encode linguistic features in BERT; linear structure is known for such concepts as truthfulness \cite{marks2023geometry} and sentiment \cite{DBLP:journals/corr/abs-2310-15154}. This direction has been extended to the \textit{linear representation hypothesis} that posits that language models operate with one-dimensional representations of concepts in the activation space \cite{bricken2023towards, park2023the}. However, \citet{engels2024not} showed that some concepts are multi-dimensional.

\textbf{Components of Transformer-based embeddings}
can provide useful features via the geometry of their inner representations or parameter spaces.
For instance, \emph{outlier dimensions} in the embedding spaces of models such as BERT \cite{DBLP:conf/naacl/DevlinCLT19} or RoBERTa \cite{liu2019roberta}, characterized by unusually high variance and/or mean values, have been studied in detail, including
their emergence during training and effects of disabling them post-training \cite{kovaleva-etal-2021-bert}, their relationship with positional embeddings and impact on word-in-context tasks \cite{luo-etal-2021-positional}, influence on the quality of representations \cite{timkey-van-schijndel-2021-bark}, and relations to the shapes of attention maps and token frequencies \cite{puccetti-etal-2022-outlier}.
Activations of Transformer-based LMs have been investigated for language structure information \cite{jawahar-etal-2019-bert}, semantic and syntactic features \cite{conneau-etal-2018-cram}; the latter work also introduces a comprehensive selection of probing tasks. 
However, 
only outlier dimensions have been studied in full detail;
we aim to address this gap by studying how removing specific dimensions from RoBERTa embeddings can improve detection of artificially generated text.

\textbf{Semantics of attention heads in Transformers}
have been studied for a long time: \citet{Kovaleva2019RevealingTD} provided empirical research on BERT attention heads, demonstrating overparameterization by pruning some of them, \citet{Michel2019AreSH} showed that most heads can be removed at test time without significant performance loss. \citet{clark-etal-2019-bert} studied the specialization of attention heads; \citet{pande2021heads}, their functional roles.
In BERT-like models, important information is distributed across layers; e.g., \citet{jawahar-etal-2019-bert} showed that lower layers capture phrase-level information, which is partly lost in the upper layers; the model captures a hierarchy of linguistic information, with deeper layers required to capture long-distance dependencies. \citet{bian-etal-2021-attention} showed that attention maps are correlated across layers and organized into clusters. Therefore, here we focus on groups of attention heads within a layer rather than individual heads.

\textbf{Artificial text detection} (ATD)
is a new field of study (up until recently, artificial content was mostly easy to distinguish), but there already exist many promising approaches. 
Score-based methods include DetectGPT, which measures the curvature of the probability function \cite{mitchell2023detectgpt}, and GPTZero \cite{tian2023gptzero}, which checks the perplexity and burstiness of a text; these methods, however, are limited to a single domain or generator. %
Throughout recent work, it remains a reasonable baseline for general and cross-distribution ATD to take embeddings from BERT-like models as a feature space and train logistic regression (LR) over them. Following~\citet{tulchinskii2023intrinsic} and~\citet{Jawahar20}, we take the RoBERTa model \citep{liu2019roberta} to extract text embeddings, use mean-pooling over embeddings, and train LR models for ATD.
The recent SemEval-2024 competition ~\cite{wang2024semeval} proposed challenge in a multi-generator, multi-domain, and multi-language setting based on the new ATD dataset that was introduced in \citet{wang-etal-2024-m4}. Task 8 included problems such as binary classification, source identification, and fake/real text boundary detection. Solutions used approaches such as LLM fine-tuning (RoBERTa, XLM-R), contrastive learning, and ensemble methods.
However, while all these approaches are data-intensive, absolute classification quality is still poor. In this work, we use classifiers that perform well on in-domain data and aim to improve their performance on unseen domains. %

\section{Methods}\label{sec:methods}

Removing unnecessary features is often an effective method to improve the robustness of a machine learning model. The embedding space has linear substructures responsible for linguistic features such as token frequencies, word-in-context information etc. \citep{luo-etal-2021-positional, puccetti-etal-2022-outlier}. We aim to detect and erase such substructures, which are harmful for ATD generalization.

\def\z{\mathbf{z}}
\def\x{\mathbf{x}}
\def\bc{\mathbf{c}}
\def\cC{\mathcal{C}}

\subsection{Linear decompositions of embeddings}

\textbf{PCA and the standard basis}. Let $\x$ be some text input, $\z\in \mathbb{R}^d$, its embeddings obtained by some model, $\z=M(\x)$, $\cC = \{\bc_1,\ldots, \bc_d\}$, a basis of $\mathbb{R}^d$, and let $\alpha_i$ be coefficients of $\z$ in $\cC$, $\z = \sum_{i=1}^d\alpha_i\bc_i$.
We want to split $\cC$ into \emph{good} and \emph{bad} parts, $\cC=\cC_{g}\cup \cC_{b}$, so that
components in $\cC_g$ contain most of the information general for all domains, while $\cC_b$ is responsible for spurious domain-specific features. Then, we construct a classifier on \emph{restricted} embeddings $\z'$ where the ``bad'' part is nullified, $\z' = \sum_{i\in \cC_g}\alpha_i\bc_i$.
Intuitively, information about the style, topic, and other semantic properties is harmful for ATD, and we want to focus on residual features that are less important for other NLP tasks.
\citet{podolskiy2021revisiting} show that PCA can serve as such a decomposition for a Transformer-based model:
removing top components computed for an in-domain dataset improves OOD detection. Indeed, for a dataset of natural texts $\mathcal{D}$, subspace $\langle\cC_b\rangle$ should ``explain'' the data variability, while the variance of $\mathcal{D}$ projected on $\langle \cC_g\rangle$ is expected to be low. PCA is a theoretically optimal way to find such subspaces (see Appendix~\ref{sec:appendix-def-and-theory}).

\def\be{\mathbf{e}}

Despite PCA's solid theoretical background, in practice it does not always perform well; in ATD, we usually deal with a small dataset that cannot fully capture the real distribution, which is bad for PCA.
To access data properties beyond those represented in our train set, we propose to utilize the internal structure of the pretrained embedding model. Indeed, Transformer-based models tend to \emph{disentangle} some data properties during training, and semantic interpretation has been discovered for some neurons and embedding dimensions \cite{luo-etal-2021-positional, timkey-van-schijndel-2021-bark, puccetti-etal-2022-outlier}. We hypothesize that such ``built-in'' disentanglement could lead to meaningful subspaces spanned by a subset of the \emph{standard basis}, i.e., vectors $\{\be_1=[1,0,\dots,0], \be_2=[0,1,\dots,0], \dots, \be_d=[0, 0,\dots, 1]\}$. Projection to a subspace $\langle \be_i | i\in S\rangle$ for some subset of indices $S$ can be done by simply nullifying all embedding dimensions except $S$. Our experiments support this intuition: PCA-based decomposition does not lead to any significant changes in detector's quality (see Appendix~\ref{sec:appendix_pca}), while coordinate-based subspace removal significantly improves transfer scores.

\textbf{Attention heads as linear substructures}. Both decompositions discover \textit{global} linear structure, i.e., universal directions in the embedding space independent of input data. But it is much more natural to rely on \emph{local} linearity of the data and try to discover substructures in a data manifold that does not necessarily form global linear subspaces. 
For text embeddings, the neural network represents a function from $\mathbb{R}^{d\times T}$ to a data manifold $\mathcal{M}$. We can decompose this function into a sum of input-dependent components of the same functional form.
\citet{cammarata2020thread} proposed \emph{linear circuits}, showing that the data flow in a Transformer can be represented as the main residual stream with linear addition of flows from other elements of the model  (attention heads and feed-forward blocks). We are mostly interested in attention flows because it is well known that attention heads in Transformers have highly specialized functions \cite{Kovaleva2019RevealingTD, pande2021heads}, so we hypothesize that head-wise decomposition should reflect the ``built-in'' disentanglement of the pretrained model.
We can represent a Transformer-based embedding as
\begin{equation}
    \label{eq:attn_decomposition}
    \resizebox{.89\linewidth}{!}{$\z = \Pi\left[\alpha(\x) \x_0 + \sum\limits_l \beta_l(\x) \mathrm{MLP}^l + \sum\limits_l \sum\limits_h \gamma_l(\x) A^{l,h}\right],$}
\end{equation}
where $ A^{l,h}$ are the outputs of attention heads, $\alpha$, $\beta$, $\gamma$ are scalar functions, and $\Pi$ is a centering projection $\Pi(\x) = \x-\frac{1}{d}\sum\nolimits_{i=1}^{d}\x^{i}$ (see Appendix~\ref{sec:appendix-headwise-decomposition}).

\textbf{Concept erasure}. Finally, we consider an embedding space decomposition based on extracted linear directions or low-rank subspaces responsible for some harmful semantic feature $\z_F$. If such a direction is found, we can remove it by subtracting the component corresponding to this direction from the embedding. Namely, we erase the feature as
\begin{equation}
\label{eq:feature_projection}
\hat{\z} = \z - P_F (\z),  
\end{equation}
where $P_F$ is the projection to the subspace $\z_F$.

\subsection{Subspace removing methods} 
\label{sec:removing}

\def\dsearch{D_{\mathrm{search}}}
\def\deval{D_{\mathrm{eval}}}

\textbf{Greedy search}. Our basic approach chooses the best features using a small subset of domains.
Given a multi-domain dataset $D = D_1\cup \dots \cup D_k$, where $D_i$ are domain subsets, we choose two domains $\dsearch = \{D_1, D_2\}$ to perform feature selection.
On each step, we train a classifier on $D_1$, removing one component, and look how its performance changes on $D_2$, getting a feature ranking on $D_1 \rightarrow D_2$ transfer. Then, we do the opposite, getting a ranking for $D_2 \rightarrow D_1$. The final set of residual features is obtained as the union of top-score lists in both rankings (see Appendix~\ref{sec:appendix_gready}).

\textbf{Head pruning} removes some components in decomposition \eqref{eq:attn_decomposition} by replacing the output of a given head with zeros on inference. Importantly, this approach is approximate because, besides its direct impact as a component in the decomposition, each head also has an indirect influence on all computations on subsequent layers. But \citet{gandelsman2023interpreting} showed that this indirect impact is small and can be ignored (see also Appendix~\ref{sec:appendix-headwise-decomposition}).
To choose the set of heads for pruning, we note that different layers contain different kinds of information
(e.g., semantic information is mostly in bottom and middle layers), and the linguistic complexity of tasks solved by attention heads grows from bottom to top \cite{Kovaleva2019RevealingTD, tenney-etal-2019-bert}. Therefore, 
we simply prune every layer separately.

\textbf{Concept erasure by probing tasks}. To remove a linear subspace responsible for some data properties, we apply a concept erasure technique called LEACE \cite{belrose2023leace}. Suppose we have a  $k$-class classification task defined by a dataset $Z$ with one-hot labels $Y$, and we want to erase all the knowledge required for linear separation of the classes. LEACE is a projection-based method of the form \eqref{eq:feature_projection}, with theoretical guarantees that any linear classifier on top of $\hat{\z}$ cannot solve the classification task better then a constant predictor. Erasing a concept from an embedding $\z$ is defined as
\begin{equation}
    \hat{\z} = \z - W^{+} (W \Sigma_{ZY}) (W \Sigma_{ZY})^{+} W \z,
\end{equation}
where $W = (\Sigma_{ZZ}^{1/2})^{+}$, $\Sigma_{ZZ}$ is $Z$'s covariance matrix, $\Sigma_{ZY}$ is the cross-covariance of $Z$ and $Y$. Geometrically, LEACE is the least-squares-optimal transform that maps centroids of different classes of the dataset $(Z, Y)$ to the same point, making linear separation impossible. 

In this work, we utilize probing tasks provided by \citet{conneau-etal-2018-cram}, designed to represent elementary linguistic concepts (see Section~\ref{sec:data}). These experiments allow us to not only improve ATD robustness, but also obtain insights about the influence of interpretable linguistic features.

\section{Data}\label{sec:data}

\textbf{ATD datasets}.
There are few high-quality datasets with both human and artificial text. One such dataset was presented by \citet{wang-etal-2024-m4} and used in the SemEval-2024 competition\footnote{\url{https://semeval.github.io/SemEval2024/}}; it covers five domains: Wikipedia, Reddit, WikiHow, PeerRead, and arXiv. We have used five text generation models: GPT-3.5 \cite{chatgptpost}, Davinci003\footnote{\url{https://platform.openai.com/docs/models}}, Cohere\footnote{\url{https://docs.cohere.com/docs/models}} , Dolly-v2 \cite{dollytechreport}, and BLOOMz \cite{muennighoff2023crosslingual}. Since the amount of human-written text in each domain is larger than generated by each model, we crop human data so that there are 3000 samples of parallel data for each domain and model/human combination. 

\begin{figure}[!t]
   \centering
   \includegraphics[width=\linewidth]{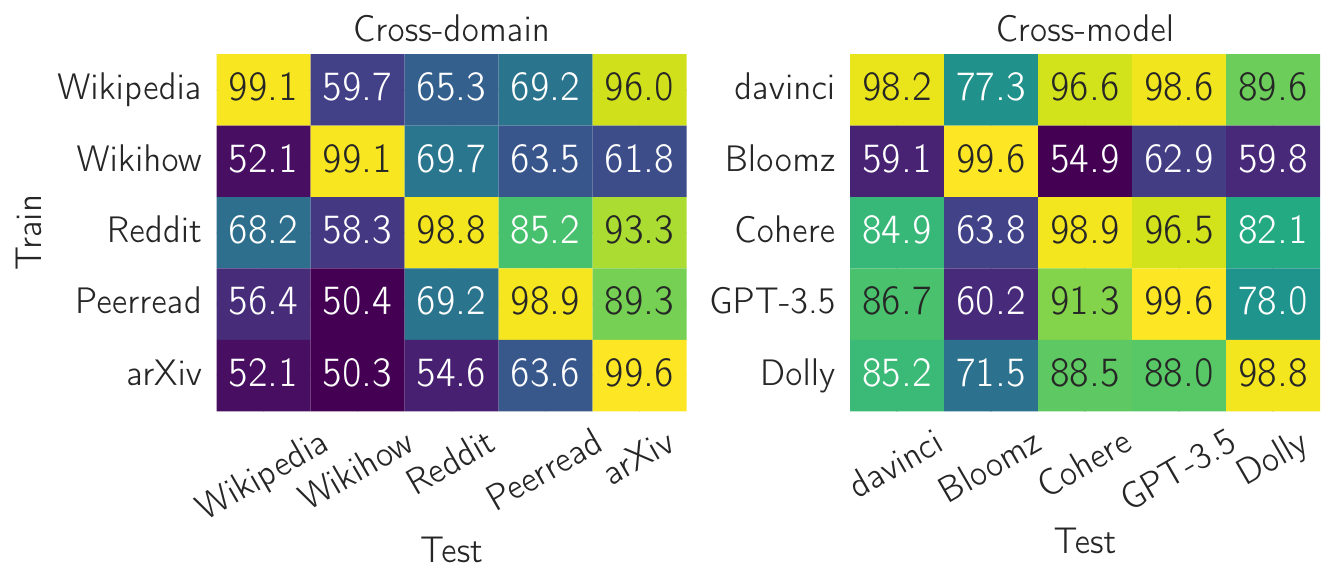}
   \caption{Mean accuracy in cross-domain (left) and cross-model ATD by RoBERTa-base on SemEval}
   \label{fig:roberta_semeval}
\end{figure}

Our second dataset, used by \citet{tulchinskii2023intrinsic}, has three domains---Wikipedia, Reddit, StackExchange---with davinci003 generations. Compared to SemEval, it has a larger distribution shift in basic text features (e.g., length), which makes it harder for cross-domain transfer. We extend it by adding similar text generated by GPT-4o: continuing text from Wikipedia articles, long-form question answering on Reddit Q\&A and StackExchange. Thus, we obtain a dataset, called below GPT-3D, with six domain-model pairs.

\textbf{Experimental setup}. 
Similar to \citet{wang-etal-2024-m4}, we create two tasks for SemEval dataset:
\begin{inparaenum}[(1)]
\item in the \emph{cross-domain} task, we concatenate data across generating models, getting five binary ATD tasks in different domains;
\item in the \emph{cross-model} task, concatenation across domains yields five binary ATD tasks for each generator model.
\end{inparaenum}
Thus, results are presented as 
$5 \times 5$ heatmaps (e.g., Fig.~\ref{fig:roberta_semeval}) and its aggregations.

 For GPT-3D
we report average OOD scores, i.e. the accuracy of classifiers trained on one domain-model subset and evaluated on the rest; average accuracy values do not include training sets.

For more technical details, see Appendix~\ref{app:technical}.

\textbf{Probing datasets}.
For probing and concept erasure experiments, we use the dataset used by \citet{conneau-etal-2018-cram} with
several supervised classification tasks:
\emph{SentLen}, predicting the length of the sentence, \emph{TreeDepth}, finding the depth of a syntactic tree, \emph{TopConst}, classifying the high-level syntactic structure (top two nodes in the syntax tree), classifying 
\emph{Tense}, \emph{SubjNum} (subject number), and \emph{ObjNum} (object number) in the main clause, 
detecting errors with \emph{BShift} (bigram shift, word order inversion in a bigram), \emph{SOMO} (Semantic Odd Man Out, where a word is replaced with a random grammatically fitting word), and
\emph{CoordInv} (Coordination Inversion,
whether the coordination of two clauses in the sentence is inverted),
and predicting exact words from a 1000-word vocabulary in \emph{WC} (Word Content).

\section{Results and Analysis}\label{sec:eval}

Here we present results on baseline, heads pruning, concept erasure and selecting coordinates. PCA-based results are reported in Appendix~\ref{sec:appendix_pca}.

\def\tblangle{50}

\begin{table}[!t]\centering\setlength{\tabcolsep}{2pt}\small
\begin{tabular}{lccccc}
\toprule
Domains   &  \rotatebox{\tblangle}{Wikipedia} & \rotatebox{\tblangle}{\textcolor{red}{WikiHow}} & \rotatebox{\tblangle}{Reddit} &    \rotatebox{\tblangle}{PeerRead}    &  \rotatebox{\tblangle}{\textcolor{red}{arXiv}}     \\
\midrule
Avg. transfer to:              & 57.2 & \textcolor{purple}{54.7} & 64.7 & 70.4 & \textcolor{teal}{85.1} \\
Avg. transfer from:              & 72.5 & 61.8 & \textcolor{teal}{76.3} &     66.3 & \textcolor{purple}{55.2} \\
Avg. sent. length & 38.7 & \textcolor{teal}{44.4} & 17.0 & 14.7 & \textcolor{purple}{10.4} \\
Avg. ``!'' count & 0.24 & \textcolor{teal}{0.79} & 0.25 & 0.08 & \textcolor{purple}{0.01} \\
Avg. ``?'' count & 0.12 & \textcolor{teal}{0.90} & 0.36 & 0.43 & \textcolor{purple}{0.03} \\\midrule
Generators   &  \rotatebox{\tblangle}{davinci} & \rotatebox{\tblangle}{\textcolor{red}{Bloomz}} & \rotatebox{\tblangle}{Cohere} &    \rotatebox{\tblangle}{GPT-3.5}    &  \rotatebox{\tblangle}{Dolly}     \\
\midrule
Avg. transfer to:              & 79.0 & \textcolor{purple}{68.2} & 82.8 & \textcolor{teal}{86.5} & 77.4 \\
Avg. transfer from:              & \textcolor{teal}{90.5} & \textcolor{purple}{59.2} & 81.8 & 79.1 & 83.3 \\
Avg. sent. length & 17.7 & \textcolor{purple}{10.9} & 15.3 & \textcolor{teal}{22.6} & 21.2 \\
Avg. ``!'' count & 0.18 & \textcolor{teal}{0.50} & \textcolor{purple}{0.04} & 0.23 & 0.29 \\
Avg. ``?'' count & \textcolor{purple}{0.08} & \textcolor{teal}{0.39} & 0.11 & 0.13 & 0.22 \\
\bottomrule
    \end{tabular}
\caption{Average RoBERTa detector accuracy by domains and by generators on \emph{SemEval} (in \%), avg length of generated sentences (in symbols) and avg counts of ``!'' and ``?'' marks per text sample; \textcolor{purple}{dark red}~-- smallest value in a row, \textcolor{teal}{dark green}~-- largest value, \textcolor{red}{red}~-- domains and generators with lowest transfer accuracy.}\label{tab:hard_to_transfer}
\end{table}

\textbf{Baseline RoBERTa}.
As a baseline we use logistic regression (LR) trained on mean-pooled RoBERTa embeddings. Results are shown in Fig.~\ref{fig:roberta_semeval} for SemEval and Fig.~\ref{fig:roberta_gpt4}a for GPT-3D; the cross-domain and cross-model settings are challenging in both tasks. Fig.~\ref{fig:roberta_semeval} shows that in-domain classification is almost perfect for baseline LR on RoBERTa embeddings, but the cross-domain part is very inconsistent: e.g., transfer from Reddit to PeerRead works well across all models ($91\%$ avg accuracy) but transfer from arXiv to WikiHow is uniformly bad ($54\%$).
In \textit{SemEval}, \textit{Wikihow} is the hardest domain to transfer to, while \textit{Arxiv} is the hardest domain to transfer from (Table~\ref{tab:hard_to_transfer}); both domains contain syntactic anomalies (very few or many ``!'' and ``?'' marks, unusual average sentence lengths etc.). \textit{Bloomz} is the hardest model to transfer both to and from (Table~\ref{tab:hard_to_transfer}), and it also generates unusual texts (very short sentences replete with ``!'' and ``?''). But generally, it is not easy to predict which transfer direction is easier in ATD or explain the reasons for it; e.g., \textit{Wikipedia}, often used for NLP model evaluation \cite{merity2016pointer}, is far from the best basis for transfer, especially in the cross-model setting (Fig.~\ref{fig:roberta_gpt4}a).
We also compare (Fig.~\ref{fig:roberta_gpt4}f) our proposed methods with the approach based on the intrinsic dimensionality (PHD) of real and artificial texts tokens embedding point clouds, according to \cite{tulchinskii2023intrinsic}.

\textbf{Average transfer results.} Table~\ref{tab:main_table} and Fig.~\ref{fig:roberta_gpt4} show that our methods provide a stable improvement of OOD scores for classifiers trained on separate  domain-model subsets, for both SemEval and GPT-3D datasets. \emph{TopConst} concept erasure yields the highest increase among methods that do not have access to OOD data ($+3\%$), and improvement increases for the most difficult domain pairs (e.g., $+6\%$ for \emph{Wikipedia}--\emph{Reddit}).
Interestingly, the PHD method by \citet{tulchinskii2023intrinsic}, while providing very stable cross-domain results for GPT-3-based generations, completely fails to deal with GPT-4o (Fig.~\ref{fig:roberta_gpt4}~(f)), while our methods increase cross-model scores up to $10\%$. Still, results for the most difficult pairs are unsatisfactory, falling below the random baseline; the only method that can achieve at least random level for \textit{any} OOD subset is head pruning, where the heads are selected on validation set combined of all models and domains examples ($+9.1\%$ ``cross-all'' compared to full RoBERTa, Fig.~\ref{fig:roberta_gpt4}~(e)). Further we describe results for each method in details and in the Appendix~\ref{app:combination} we describe the combination of methods.

\begin{table}[!t]
\centering \setlength{\tabcolsep}{2pt} \small
{
\begin{tabular}{lcccccc}
\toprule
        & \multicolumn{3}{c}{\textbf{Cross-domain}}              & \multicolumn{3}{c}{\textbf{Cross-model}}              \\
        & Avg           & Max $\uparrow$  & Max $\downarrow$  & Avg           & Max $\uparrow$ & Max $\downarrow$  \\
        \midrule
RoBERTa & 73.0          & -             & -             & 82.8          & -            & -             \\
1       & \textbf{76.0} & \textbf{18.9} & -7.1          & 82.6          & \textbf{4.4} & -4.4          \\
2       & 73.9          & 6.3           & -3.7          & 83.3          & 2.8          & -2.4          \\
3       & 75.0          & 8.6           & -1.9          & 83.1          & 2.6          & -1.8          \\
4       & 74.6          & 8.4           & -1.6          & \textbf{83.7} & 3.3          & -1.6          \\
5       & 73.7          & 3.6           & -1.8          & 82.9          & 1.7          & -1.4          \\
6       & 72.6          & 2.8           & -3.7          & 82.7          & 1.6          & -2.1          \\
7       & 72.3          & 1.3           & -5.2          & 82.5          & 1.2          & -3.0          \\
8       & 73.3          & 3.5           & -3.5          & 82.5          & 0.4          & -1.8          \\
9       & 73.1          & 4.5           & \textbf{-1.5} & 82.7          & 0.6          & -1.2          \\
10       & 72.7          & 3.2           & -3.2          & 82.4          & 0.4          & -2.2          \\
11      & 73.2          & 3.8           & -6.5          & 82.3          & 0.4          & -1.4          \\
12      & 73.7          & 7.2           & -3.7          & 82.8          & 1.7          & \textbf{-1.1}\\
\bottomrule
\end{tabular}
}
\caption{Balanced accuracy for OOD classification for different pruned layers on \emph{SemEval}}
\label{tab:text_results_layers}
\end{table}

\textbf{Head pruning for transfer tasks}.
We adapt head pruning \cite{voita-etal-2019-analyzing} to remove a whole layer of attention heads. Since layers of a model have rough linguistic meanings \cite{jawahar-etal-2019-bert}, thus we analyse the impact of structural-level information on ATD. 
Fig.~\ref{fig:roberta_semeval_pruning_detailed} and Table~\ref{tab:text_results_layers} show detailed results for each layer pruned on \emph{SemEval}.
Removing the first layer improves average cross-domain accuracy by $3\%$, but the improvement is unstable (from $-7.1\%$ to $+18.9\%$ in different domains). 
Pruning layers 3 and 4 is more stable and beneficial in both settings.
Cross-domain ATD is more challenging;
Fig.~\ref{fig:roberta_semeval_pruning_detailed} (top) shows that some domains (\textit{Wikipedia} and \textit{WikiHow}) exhibit similar patterns but others are unrelated. The best scores are in transfer from \emph{Reddit}, achieving $81\%$ mean balanced accuracy with $0$-th layer pruned ($+5\%$ to full RoBERTa).
The cross-model setting is easier and not greatly affected by pruning layers, with the exception of BLOOMz. Here
the best source model is GPT-3.5-davinci, with $92\%$ cross-model accuracy after removing layer 4.

\begin{figure}[!t]
   \centering
   \includegraphics[width=\linewidth]{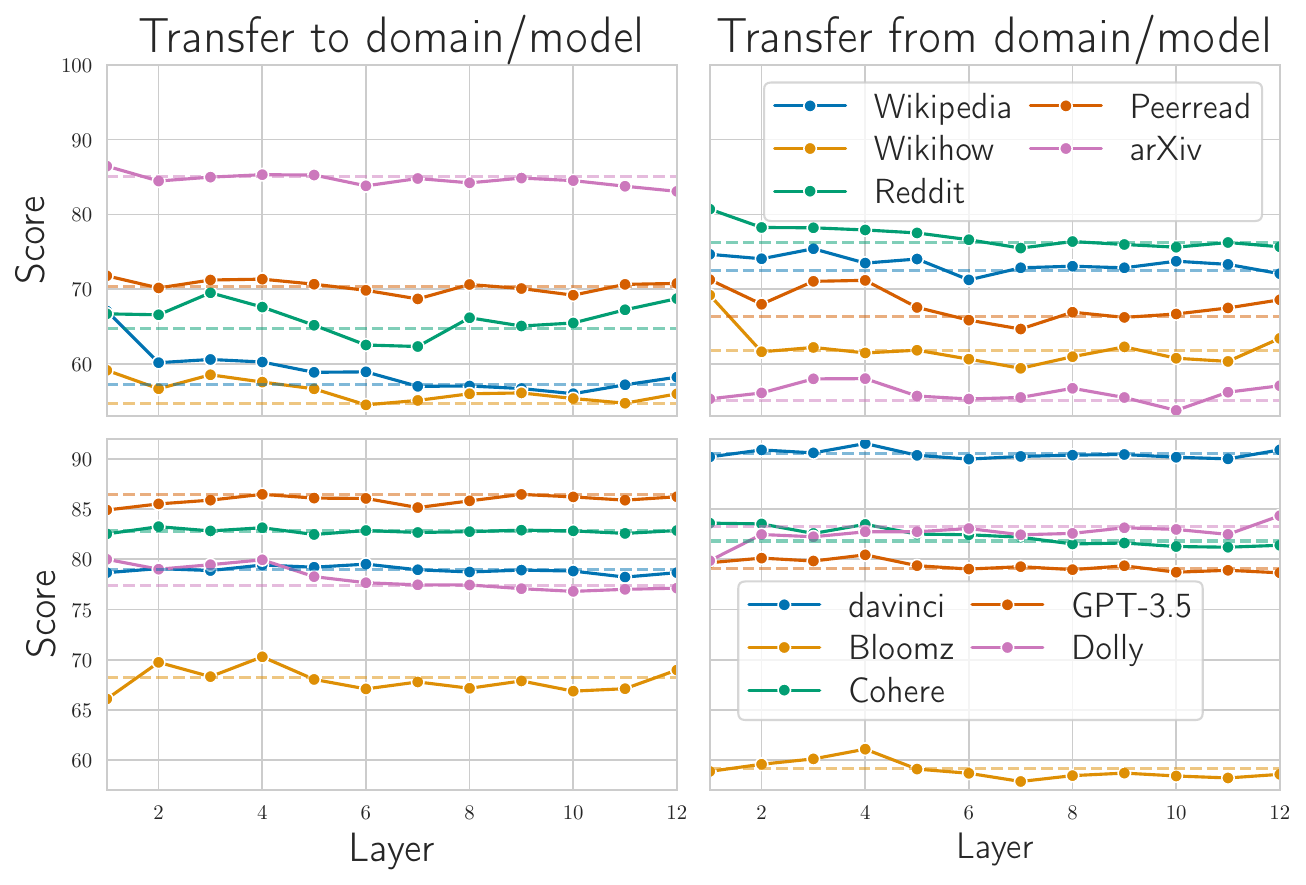}
   \caption{Mean accuracy on \emph{SemEval} with pruned RoBERTa layers. Dashed lines show the baseline.}
   \label{fig:roberta_semeval_pruning_detailed}
\end{figure}

\textbf{Concept erasure}.
Generally, results on \emph{SemEval} show that the best concepts to erase are \emph{TopConst} and \emph{TreeDepth}, improving up to 2.1\% on cross-domain transfer and not hurting the cross-model transfer. Erasing \emph{WC} also performs well but is less stable.
Figs.~\ref{fig:erasure_semeval_from} give more detailed information. 
Although changes compared to Table~\ref{tab:text_results_concepts} are marginal on average, they range from $-8.5\%$ to $+13\%$ across domains and models. Grammatical properties, (\emph{Tense}, \emph{SubjNum}, \emph{ObjNum}) have no significant impact, while erasing global syntax information (\emph{TopConst}, \emph{TreeDepth}) improves cross-domain transfer up to $+13\%$, especially from \textit{wikipedia} and \textit{arxiv}. This means that LLMs in general are not good in mimicking complicated syntactic structures, but have no problem with local grammatical categories. Erasing \emph{WC} erasure leads to the largest cross-domain improvement, which means that word semantics produce domain-specific spurious features that harm generalization. There is one outlier: \textit{wikihow}$\rightarrow$\textit{arxiv}; 
we hypothesize that these domains have common word-level features due to many bullet points, numbered lists, and sequential structures in both. 
For cross-model transfer, erasing all three tasks related to error detection in sentence structure (\emph{BShift}, \emph{CoordInv}, \emph{SOMO}) are harmful for ATD performance and robustness; erasing global syntax (\emph{TopConst}, \emph{TreeDepth}) improves performance, while word content (\emph{WC}) leads to contradictory results. 

We conclude that the ability to detect grammatically correct sentences is crucial for robust AI-generated text detection; the difference in global syntax between natural and generated texts is significant, but varies between models and domains, so erasing this information helps generalization, and individual word semantics is a source of spurious features. On the other hand, world-level grammatical categories are captured well by all generators and do not influence ATD performance.

\textbf{Selecting embedding components and heads}. 
To evaluate component removal, 
we use \emph{Reddit} and \emph{Wikipedia} domains from GPT-3D as $\dsearch$, as they have the lowest cross-domain ATD accuracy.
For head selection, we used a lay-off evaluation set with samples of all generators and domains from GPT-3D. We evaluate on GPT-3D and \emph{SemEval}, using the same set of removed heads or components. 
Fig.~\ref{fig:roberta_gpt4} and Table~\ref{tab:main_table} show the results;
transfer to and from \emph{Wikipedia} and \emph{Reddit} subsets has improved. Head selection greatly improves performance on validation domains, achieving the best scores among all the methods.
In cross-task transfer (from GPT-3D to SemEval, Appendix~\ref{sec:appendix_cross_dataset}), component and head removal works better if components are chosen on the same data distribution where the classifier is trained; still, cross-dataset transfer here is generally on par with the baseline.

\begin{figure}[!t]\centering
    \setlength{\tabcolsep}{0pt}
    \def\myhei{0.56\linewidth}

    \begin{tabular}{cc}\centering
        \includegraphics[height=\myhei]{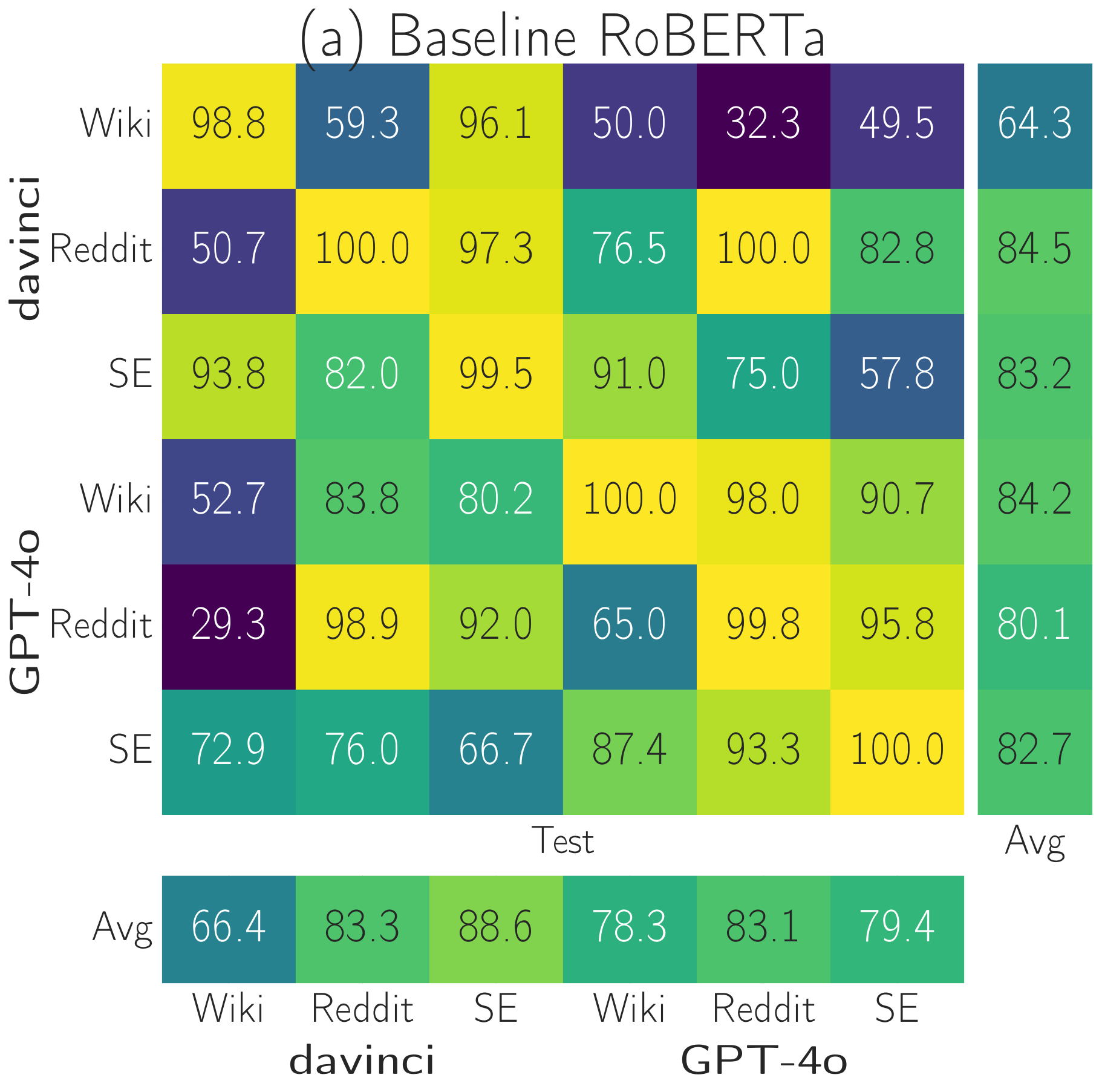} &
        \includegraphics[height=\myhei]{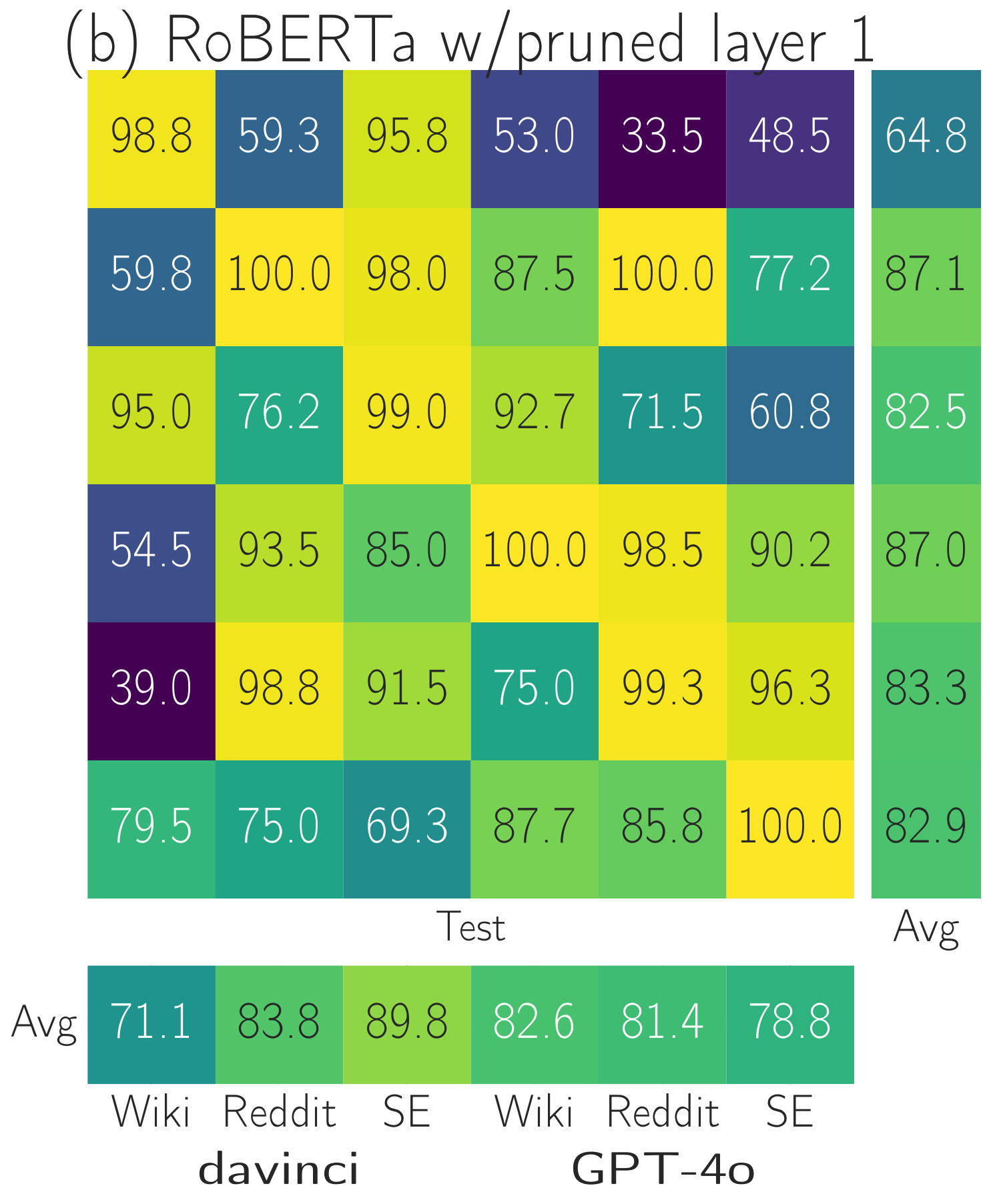} \\
        \includegraphics[height=\myhei]{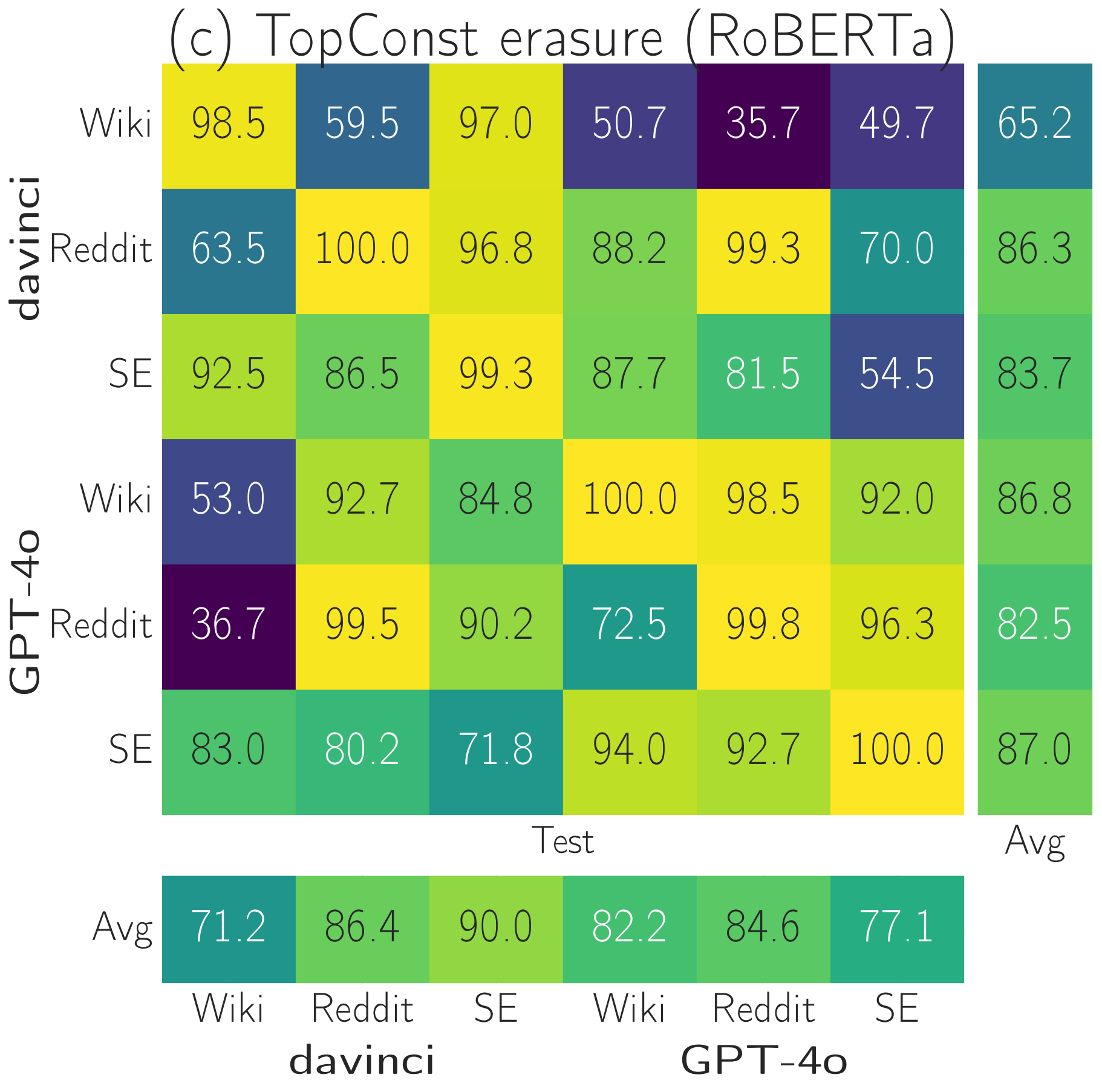}
        &
        \includegraphics[height=\myhei]{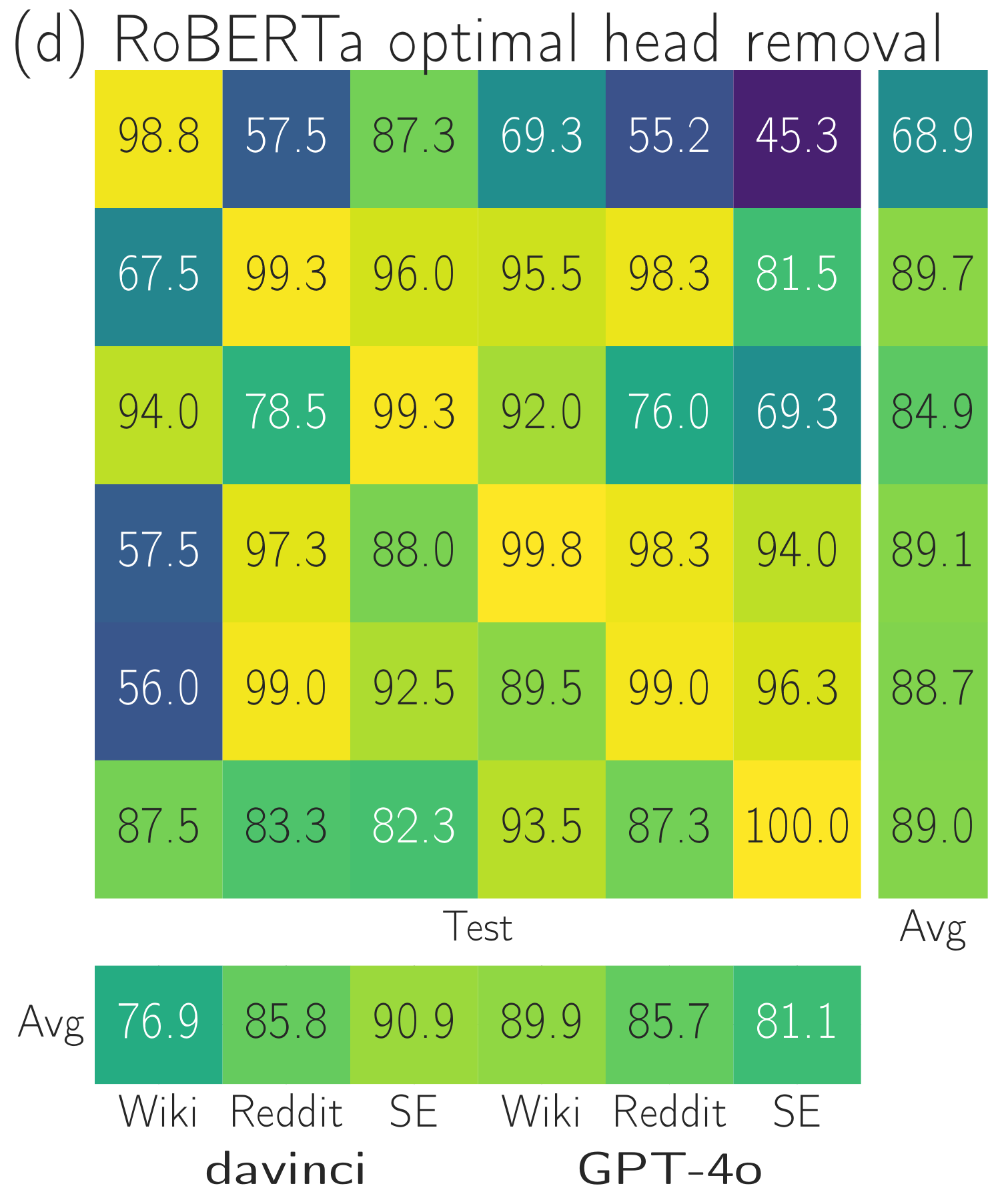} \\
        \includegraphics[height=\myhei]{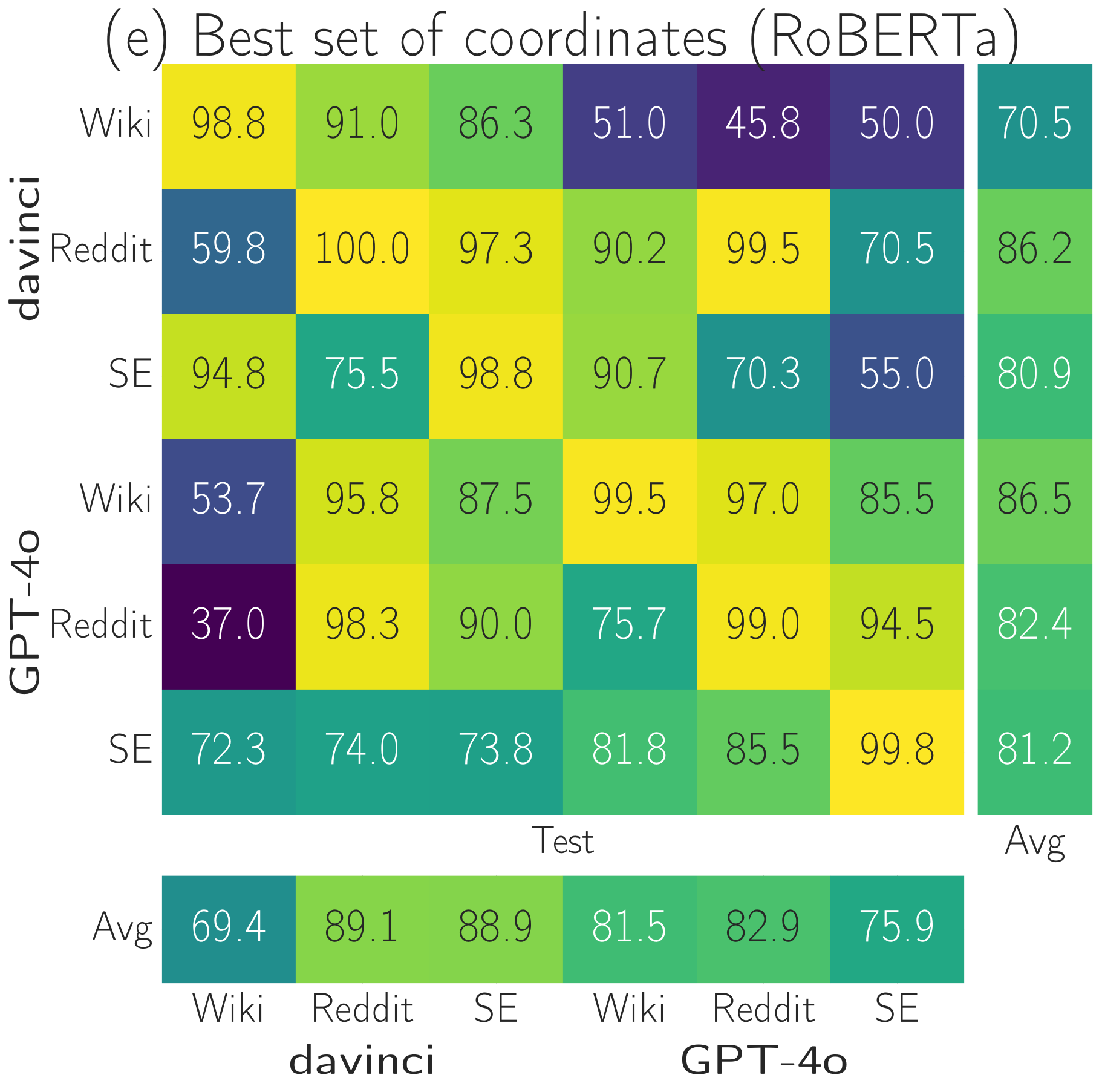} &
        \includegraphics[height=\myhei]{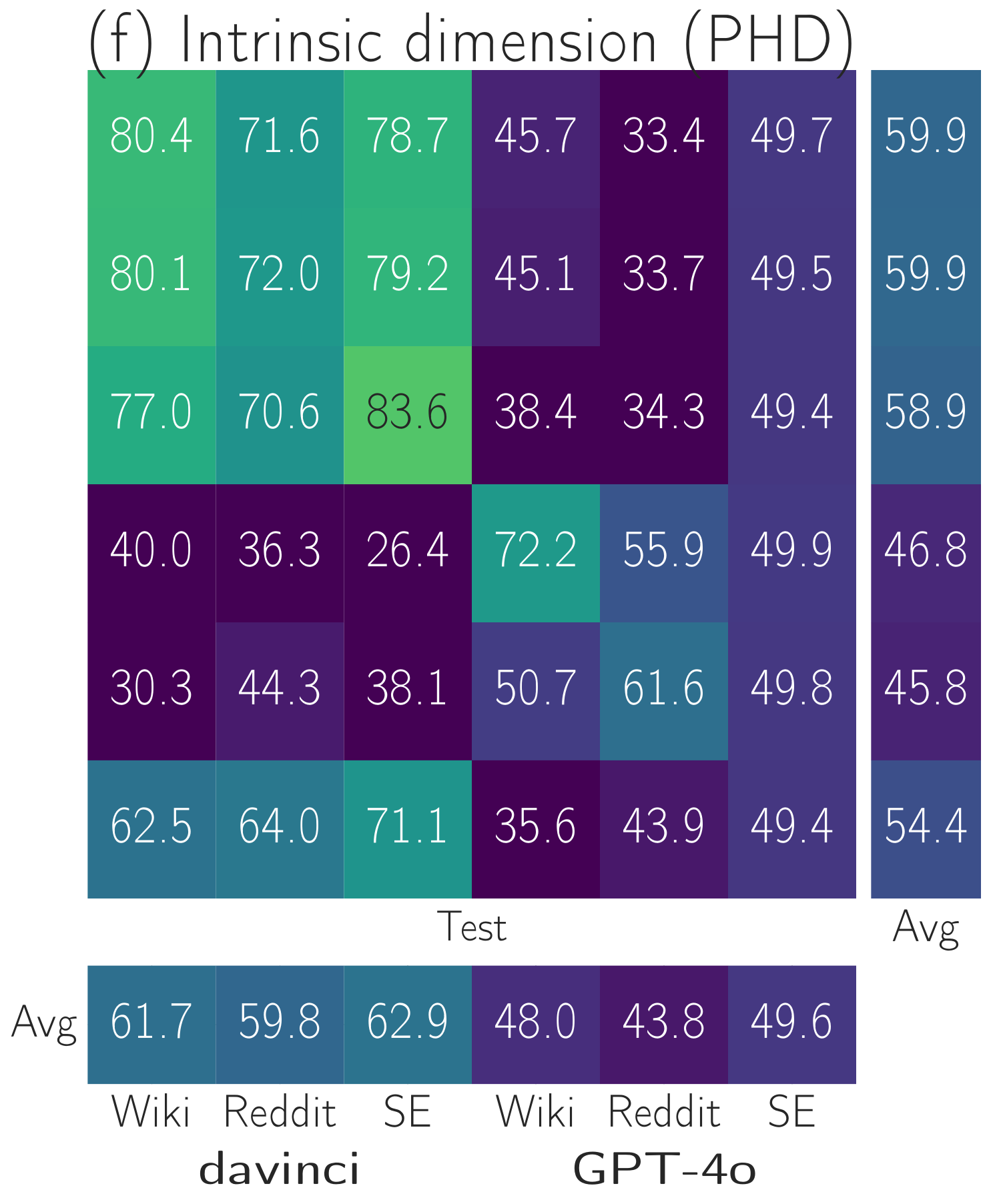}
    \end{tabular}
 
   \caption{Mean accuracy in cross-domain/cross-model ATD on GPT-3D by: (a)~RoBERTa-base, (b)~RoBERTa-base with all attention heads pruned from layer~1, (c) RoBERTa with TopConst concept erasure, (d)~optimal head removal, (e)~best set of coordinates, (f)~classifier based on PHD intrinsic dimensions.}
   \label{fig:roberta_gpt4}
\end{figure}

\begin{figure}[!t]
   \centering
   \includegraphics[width=\linewidth]{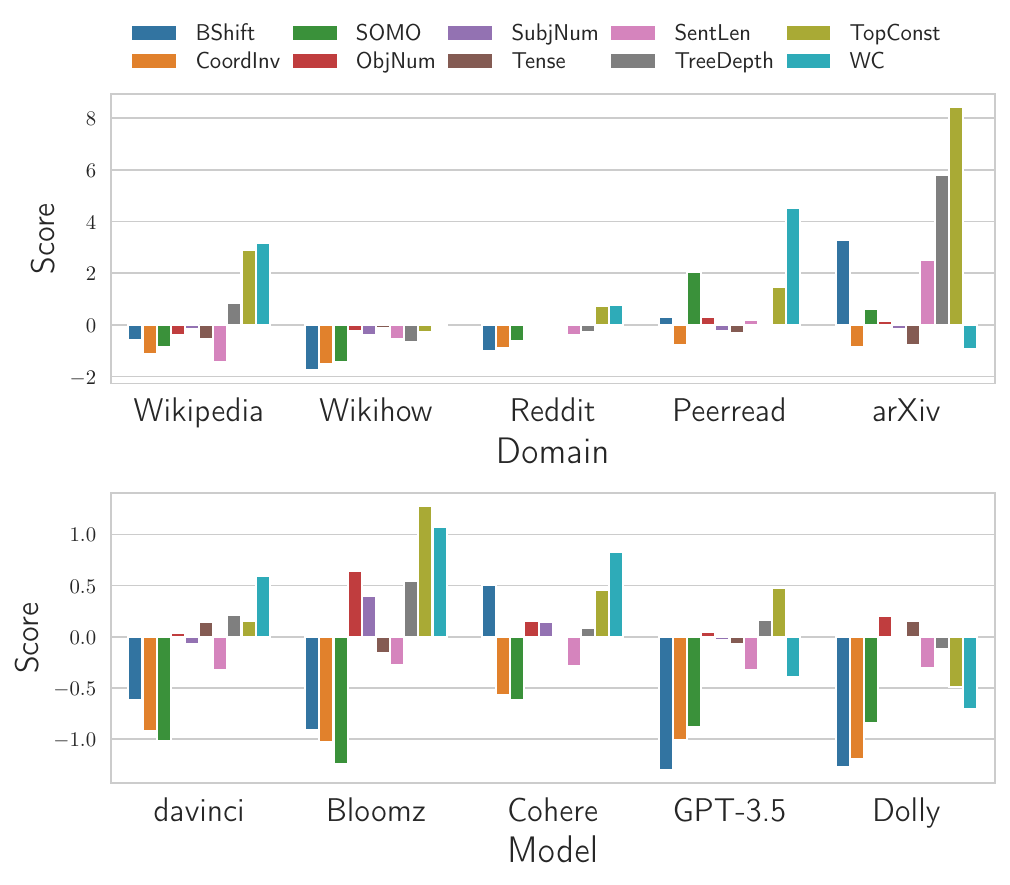}
   \caption{Score change after concept erasure in cross-domain and cross-model settings on \emph{SemEval}.}
   \label{fig:erasure_semeval_from}
\end{figure}

\def\mypwid{.11\linewidth}

\begin{table}[!t]\small\centering
\setlength{\tabcolsep}{1.7pt}
\begin{tabular}{L{.35\linewidth}C{\mypwid}C{\mypwid}C{\mypwid}C{\mypwid}C{\mypwid}}
\toprule
                      & \multicolumn{2}{c}{\textbf{SemEval}} & \multicolumn{3}{c}{\textbf{GPT-3D}} \\
RoBERTa & CD & CM  & CD & CM & CA \\\midrule
Baseline      &   73.0     &     82.8   &   84.1      &   71.0  &  70.1  \\
Layer 1 & \textbf{76.0} &     82.6 &   84.8       &     72.7      & 72.9 \\
Layer 4 &  74.6  & \textbf{83.7} &   84.9      &   72.3   &  72.0 \\
TopConst erased     &   75.1       & 83.1  & \textbf{86.7} & 71.4 & {73.1} \\
TreeDepth erased   &   73.9       &     83.0     &       85.3      &  73.3   &  72.0  \\
\midrule
Selected heads & 74.3 & 80.0  &   86.6  &  \textbf{79.3} & \textbf{79.2}  \\
Selected coordinates &   74.5 & 82.6    & 85.4 & 71.9 &  72.8 \\\bottomrule
\end{tabular}
\caption{Balanced accuracy for OOD classification: cross-domain (CD), cross-model (CM), cross-all (CA). For confidence intervals on SemEval, see Appendix~\ref{sec:confidence_intervals}.}
\label{tab:main_table}
\end{table}

\begin{table*}[!t]\small\centering
\setlength{\tabcolsep}{10pt}
\begin{tabular}{llllllllll}
\toprule
                     & \multicolumn{3}{c}{\textbf{BERT, GPT-3D}} & \multicolumn{3}{c}{\textbf{Phi2, GPT-3D}} & \multicolumn{3}{c}{\textbf{MiniCPM, GPT-3D}} \\
                    & \textbf{CD} & \textbf{CM} & \textbf{CA}
                    & \textbf{CD} & \textbf{CM} & \textbf{CA} & \textbf{CD} & \textbf{CM} & \textbf{CA} \\\midrule
Baseline & 82.4 & 81.9 & 71.1 & 92.2 & \textbf{92.3} & 86.7 & 92.8 & 88.5 & 80.5 \\
Layer 1 & 83.2 & 77.8 & 69.3 & 85.5 & 89.5 & 78.0 & 77.3 & 65.8 & 56.3 \\
Layer 4 & 82.2 & 78.9 & 69.6 & 92.6 & \textbf{92.3} & \textbf{87.2}& 92.0 & 87.0 & 78.0 \\
Selected heads & 85.4 & 81.0 & 73.1 & --- & --- & --- & --- & --- & --- \\
TopConst erased & 83.1 & 81.4 & 70.9 &  91.8 & 91.5 & 86.1 & 92.8 & 87.2 & 80.2 \\
TreeDepth erased  &  84.0 & 83.2 & 71.8 & \textbf{93.3} &  91.8 & 87.0 & \textbf{93.4} & \textbf{88.6} & \textbf{80.6}\\
Selected coordinates & \textbf{92.1} & \textbf{88.0} & \textbf{85.2} & 93.1 & 89.9 & 86.7 & --- & --- & ---\\
\bottomrule
\end{tabular}
\caption{Aggregated OOD scores for BERT, Phi2, and MiniCPM embeddings: cross-domain (CD), cross-model (CM), cross-all (CA). Best results are given in bold.}
\label{tab:main_table_other_models}
\end{table*}

\textbf{Influence of the embedding model}.
RoBERTa is commonly used as the encoder for ATD \cite{krishna2023paraphrasing,solaiman2019release,tulchinskii2023intrinsic}, but we have tested other models as well.
Table~\ref{tab:main_table_other_models} and Figure~\ref{fig:pca_acc_bars} in the Appendix~\ref{sec:appendix_pca} show the results; in all cases, we trained LR on mean-pooled embeddings of the last layer. There is an interesting difference between encoder and decoder-based models: although the quality is very different and correlates with model size, all tested encoders are well suited for our context removal methods (their performance increases, often significantly), while the decoder's behaviour is the opposite. Table~\ref{tab:main_table_other_models} shows the results of subspace removal methods for BERT \cite{DBLP:conf/naacl/DevlinCLT19} and Phi-2 \cite{phi2techreport} embeddings; Phi-2 is larger, so its baseline scores are much higher,
but embedding restriction does not lead to improvements while BERT's quality increases, making the results of these models comparable after component removal despite different model size. 
To test our methods with more resource-efficient smaller LMs, we used the MiniCPM-1B model~\cite{hu2024minicpm}. Table~\ref{tab:main_table_other_models} shows that, as expected, concept erasure yields marginal improvements and other methods do not. In absolute values, MiniCPM is on par with Phi-2 in the cross-domain setup and behind Phi-2 and BERT in cross-model and cross-all settings.

We believe that the different behaviour of our methods reflects the fundamental difference in the embedding space geometry of encoders and decoders caused by limitations of the expressive power of the attention due to the triangular attention mask (e.g., the group of upper triangular matrices does not contain any nontrivial rotations or orthogonal transforms in general). On the other hand, high performance of our methods for relatively small encoder-based models shows that their text representations contain disentangled elementary features learned in pretraining and expressed by separate embedding coordinates, attention heads (i.e., linear terms in input-dependent embedding decompositions), or global directions in the embedding space.

We also report how removing components influences the embedding space geometry. PHD intrinsic dimension has the opposite behaviour in GPT-3 and GPT-4 families: the generalization ability of a PHD-based ATD classifier decreases after removing embedding components (see Appendix~\ref{sec:appendix_phdim}).

\textbf{Probing experiments with restricted embeddings}.
To understand the semantics of the removed components, we performed probing experiments upon restricted embeddings. Namely, we compared the results of a baseline model with those after removing layers or coordinates (a subset selected to optimize ATD robustness) on 10 probing tasks for different linguistic properties (see Section~\ref{sec:data}). 

\begin{table}[!t]\small\centering
\setlength{\tabcolsep}{5pt}
\begin{tabular}{lcccc}\toprule
 & \textbf{BERT} & \multicolumn{3}{c}{\textbf{Removing:}} \\
\textbf{Task} &  & \textbf{Coords} & \textbf{Layer 1} & \textbf{Layer 4} \\\midrule
BShift & 86.9 & 78.3 & 88.6 & 86.5 \\
CoordInv & 64.0 & 56.3 & 62.2 & 62.0 \\
ObjNum & 82.9 & 65.7 & 83.0 & 83.0 \\
SOMO & 64.6 & 60.4 & 64.6 & 65.2 \\
Tense  & 89.2 & 82.7 & 88.7 & 89.1 \\
SentLen & 73.8 & 44.1 & 77.9 & 75.5 \\
SubjNum & 87.3 & 72.7 & 88.3 & 87.5 \\
TopConst & 60.7 & 36.7 & 63.6 & 62.1 \\
TreeDepth &  31.9 & 22.2 & 32.9 & 34.5 \\
WC & 24.5 & 8.4 & 30.2 & 25.4 \\\bottomrule
\end{tabular}

\caption{Probing experiments.}\label{tbl:probing}
\end{table}

Table~\ref{tbl:probing} shows the results. Interestingly, removing the coordinates leads to a dramatic decrease in performance on five tasks, which means that the corresponding properties are almost completely ``erased'' from the embeddings. On the contrary, layer pruning has virtually no influence on any of the tasks, which means that elementary linguistic knowledge is fully kept. It is important to note that the probing tasks in Table~\ref{tbl:probing} are all related to grammar and syntax rather than semantics and style.

\section{Conclusion}\label{sec:concl}

In this work, we aim to improve the robustness of artificial text detectors via linear feature removal from text embeddings. 
We propose three ideas that are extremely easy to implement and achieve stable improvement in robustness averaged across domains and models, up to $14\%$ depending on the text encoder. More importantly, we conclude with the following novel insights from our work.

First, new generation models can completely break detectors; e.g., on the GPT-4 family previous detectors' perform below random, while on the same model classifiers demonstrate very high performance in the cross-domain setting. The reason could be the presence of watermarks in GPT-4 generations; if so, watermarks unknown for ATD developers are dangerous, leading to unpredictable black-box behaviour.

Second, performance with respect to the training subset is often counterintuitive; e.g., a classifier trained on Wikipedia may perform worse than on Reddit, although Wikipedia is considered a cleaner domain, better suited for general-purpose models.

Third, Transformer encoders learn disentangled intrinsic features in coordinates and attention heads, and simple decompositions perform better for ATD than more complex approaches. But this effect is less pronounced for decoder models. We plan to study differences in the geometry of encoder and decoder-based text representations in future work.

Finally, global syntax and sentence complexity is a key point for ATD, but the exact differentiating features are domain- and model-specific, so this information should be ignored. Local grammatical categories do not provide an important signal for ATD. Instead, the classifier should rely on features for detecting various types of inconsistencies.   

\section{Limitations}

In this work we show how state of the art ATD methods may fail, for instance, to transfer to new generative models. Our method increases OOD performance on some generators, but there is no guarantee that this property will be preserved for all future models. Novel pretraining techniques, data collection and processing paradigms, and model architectures can change the picture entirely. Since our method is based on supervised classification, it is not clear which features are actually important for it. It can also lead to unexpected results, especially in the presence of \emph{watermarks}, small changes in data distribution inside each generated sample deliberately injected by generative model developers. We believe that for truly reliable ATD detection, all conclusions should be interpretable, so that a human analyst could inspect the decision. By proposing the concept erasure approach, we have made a step towards interpretable ATD.

We have tested our approaches using relatively small subsets of uni-model or uni-domain data and demonstrated promising quality improvements. Nevertheless, it is still not identical to real-world scenarios, where at least several domains and generators are available in training time, and even more have to be considered during the model's application. One of our objectives in this work has been to propose a novel direction that can significantly improve ATD methods in the future and make them more reliable, but currently it is not yet a fully practical production-ready solution. 

Finally, we do not address the real-word case of post-processed and paraphrased generations, and also texts partially written by humans. For example, if some sentences of this section have been generated by GPT-4o but then partially corrected by the authors, most probably the methods considered in this work would not be able to detect it. We leave this direction for further study.

\section*{Acknowledgements}

The work of Sergey Nikolenko was supported by a grant for research centers in the field of artificial intelligence, provided by the Analytical Center for the Government of the Russian Federation in accordance with the subsidy agreement (agreement identifier 000000D730321P5Q0002) and the agreement with the Ivannikov Institute for System Programming of the Russian Academy of Sciences dated November 2, 2021 No. 70-2021-00142.

\bibliography{main}

\appendix

\section{Residual subspaces for ATD}

\subsection{Formal definitions and theory}\label{sec:appendix-def-and-theory}

In this subsection, we introduce formal definitions and recap some statements from linear algebra that are useful for a better understanding of the geometry and properties of residual subspaces. First, we define the notion of \emph{explained variance} and \emph{relative explained variance} to be able to quantify the properties of residual subspaces.

\def\pr{\mathrm{Pr}}
\def\tr{\mathrm{Tr}}
\def\bmu{\boldsymbol{\mu}}
\def\EV{\mathrm{EV}}
\def\RV{\mathrm{RV}}

\begin{definition}[Subspace explained variance \cite{shen2008sparse, gandelsman2023interpreting}] Let $\mathcal{D}\subset\mathbb{R}^d$, $\mathcal{D}=\{\x_1,\dots,\x_N\}$ be a dataset, and $S\subset\mathbb{R}^d$ is an arbitrary subspace, with $Pr(\x):\mathbb{R}^d\rightarrow S$ being the projection function onto $S$. We call the variance of the projections $Pr(\mathcal{D})$ the \emph{explained variance} of subspace $S$ with respect to $\mathcal{D}$:
\begin{multline*}
    \EV^{\mathcal{D}}(S) = \mathbb{E}_{\mathcal{D}} \| \pr(\x - \mathbb{E}[\x]) \|^2 = \\ 
    =\frac{1}{N}\sum_{\x\in\mathcal{D}} \| \pr(\x) - \pr(\bmu) \|^2, 
\end{multline*}
where $\bmu =\frac{1}{N}\sum_{\x\in\mathcal{D}}\x$.
\end{definition}

If $\bar{X}$ is a matrix of centered data vectors $(\x-\bmu)$ for $\x\in\mathcal{D}$ (row-wise), and $V$ is the $k\times d$ matrix defining an arbitrary basis of the subspace $S$, $S=\langle v_1, \dots, v_k\rangle$, then the explained variance $\mathcal{EV}^{\mathcal{D}}(S)$ can be written in matrix form:
\begin{equation}
\EV^{\mathcal{D}}(S) = \tr(\pr(\bar{X})^T\pr(\bar{X})),
\label{eq:trace}
\end{equation}
where the projection operator $\pr(X)$ can be computed as
\begin{equation}
\pr(\bar{X}) = \bar{X}V^T(VV^T)^{-1}V.
\label{eq:projection}
\end{equation}
In the case of an orthonormal basis, 
$V^TV=\mathbb{I}$, formulas~\eqref{eq:trace} and~\eqref{eq:projection} become a simple decomposition into the sum of component-wise variations:  
\begin{equation}
\EV^{\mathcal{D}}(S) = \sum_{i=1}^{k}\mathcal{V}^\mathcal{D}_i,
\label{eq:orthonormal}
\end{equation}
where $\mathcal{V}^\mathcal{D}_i$ is the variance along the $i$th basis vector.

\textit{Relative explained variance} reflects the relative importance of a subspace by the ratio of the subspace explained and total variance of the data: 
$$
 \RV^\mathcal{D}(S) = \frac{\EV^\mathcal{D}(S)}{\mathrm{Var}(\mathcal{D})}.
$$

For data distributed equally over all directions, it is proportional to the subspace dimension. For example, for $\mathcal{D}\sim\mathcal{N}(\mu,\sigma^2)$ for any subspace $S$
 $$
 \RV^\mathcal{D}(S) = \frac{\dim(S)}{d}.
 $$

\begin{definition} 
A subspace $S$ is called an \emph{$\alpha$-residual subspace} with respect to $\mathcal{D}$ if and only if its relative explained variance is not greater than $\alpha$: 
\begin{equation}
\RV^\mathcal{D}(S)\le\alpha.
\label{eq:alpha-residual}
\end{equation}

\end{definition}

\begin{figure}[!t]
\includegraphics[width=\linewidth]{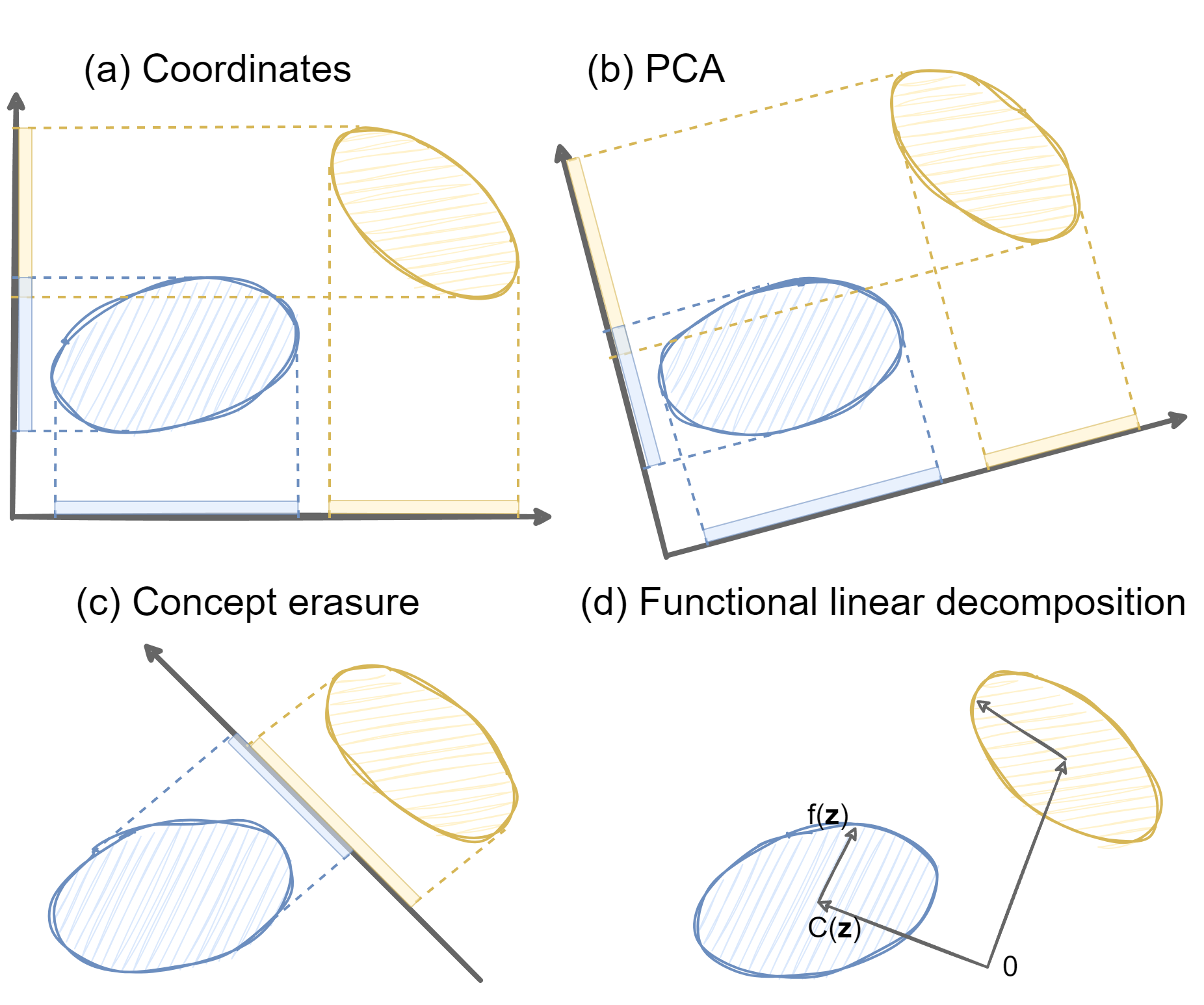}

\caption{Geometric intuition of our approaches.}\label{fig:intuition}
\end{figure}

The simplest way to find residual subspaces for a given $\alpha$ follows from~\eqref{eq:orthonormal}. We can compute the variances $\mathrm{Var}_i$ with respect to each coordinate of the embeddings, and then select the coordinates $\{u_{i_1}, \dots, u_{i_m}\}$ with the smallest variances while their sum does not exceed the desired portion of the total variance. But this method does not guarantee that the required subspace will be found even if it exists for a given dataset. Figure~\ref{fig:intuition} shows the geometric intuition of our approaches; in particular,
the residual subspace, even if it exists, may not be spanned by any subset of the standard basis. The following proposition provides a guaranteed way to find the $\alpha$-residual subspace if it exists.

\def\bu{\mathbf{u}}

\begin{proposition}
Let $\{\bu_1,\dots, \bu_d\}$ be the principal components of a 
dataset $\mathcal{D}$ with corresponding singular values 
$\lambda_1, \dots, \lambda_d$ (in descending order). Then the
explained variance of a subspace spanned by $d-k$ last
principal components $R_k=\langle \bu_{k+1}, \dots, \bu_d\rangle$ is
\begin{equation}
\EV^{\mathcal{D}}(R_k)=\sum_{i=k+1}^{d} \lambda_{i}.
\end{equation}
Moreover, $R_k$ has the minimal explained variance among all $(d-k)$-dimensional subspaces.
\end{proposition}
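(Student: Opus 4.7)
The plan is to prove both claims by combining the orthonormal-basis reduction of equation~(\ref{eq:orthonormal}) with the spectral characterization of PCA. For the first claim, I would observe that $\{\bu_{k+1},\dots,\bu_d\}$ is an orthonormal basis of $R_k$, so (\ref{eq:orthonormal}) gives $\EV^{\mathcal{D}}(R_k)=\sum_{i=k+1}^{d}\mathcal{V}^{\mathcal{D}}_i$. By the defining property of PCA, the variance of the data along the $i$-th principal component equals the $i$-th eigenvalue of the centered covariance $\Sigma=\tfrac{1}{N}\bar{X}^T\bar{X}$, i.e.\ $\mathcal{V}^{\mathcal{D}}_i=\lambda_i$, which yields the announced formula.

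For the minimality statement I would pass to matrix form. Given an arbitrary $(d-k)$-dimensional subspace $S$ with an orthonormal basis assembled row-wise into $V\in\mathbb{R}^{(d-k)\times d}$, so that $VV^T=I_{d-k}$, the trace formula (\ref{eq:trace}) together with (\ref{eq:projection}) reduces to $\EV^{\mathcal{D}}(S)=\tr(V\Sigma V^T)$. Plugging in the spectral decomposition $\Sigma=U\Lambda U^T$ with $\Lambda=\mathrm{diag}(\lambda_1,\dots,\lambda_d)$ and setting $W:=VU$ (which still satisfies $WW^T=I_{d-k}$), the explained variance becomes
\begin{equation*}
\EV^{\mathcal{D}}(S)=\tr(W\Lambda W^T)=\sum_{i=1}^{d}\lambda_i c_i,\qquad c_i:=\sum_{j=1}^{d-k}W_{ji}^2.
\end{equation*}
A short calculation shows $c_i\in[0,1]$ (columns of a matrix with orthonormal rows have norm at most one) and $\sum_i c_i=\tr(W^TW)=\tr(WW^T)=d-k$.

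It then remains to minimize $\sum_i\lambda_i c_i$ over the box-simplex $\{c\in[0,1]^d:\sum_i c_i=d-k\}$. Since $\lambda_1\ge\dots\ge\lambda_d\ge 0$, a one-line greedy argument shows that the minimum $\sum_{i=k+1}^{d}\lambda_i$ is attained at $c_1=\dots=c_k=0,\ c_{k+1}=\dots=c_d=1$, and this corner is realised by $W$ whose nonzero rows are supported on the last $d-k$ coordinates, i.e.\ by $S=R_k$. The main obstacle is the minimality step, and within it the verification of the polytope constraints on the $c_i$ from $WW^T=I_{d-k}$; once this is in place the optimization itself is trivial. The whole argument is essentially a special case of the Ky Fan / Eckart--Young--Mirsky inequality, which could alternatively be cited as a black box.
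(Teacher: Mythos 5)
Your proof is correct. For the first claim, you and the paper take essentially the same route: reduce to a sum of coordinate-wise variances via the orthonormal-basis formula and identify those with the spectrum of the covariance. For the minimality claim, however, your route is genuinely different in execution. The paper simply invokes the Eckart--Young--Mirsky theorem (which it calls the ``Frobenius theorem'') as a black box, stating that projection onto the top-$k$ singular directions is the best rank-$k$ approximation, and then reinterprets this for the orthogonal complement. You instead unpack that theorem: after diagonalizing $\Sigma$ and substituting $W = VU$, you reduce the problem to minimizing the linear functional $\sum_i \lambda_i c_i$ over the polytope $\{c \in [0,1]^d : \sum_i c_i = d-k\}$, where the constraints $c_i = (W^TW)_{ii} \in [0,1]$ and $\sum_i c_i = d-k$ come from $W^TW$ being an orthogonal projection of rank $d-k$. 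This is a self-contained proof of the Ky Fan inequality rather than a citation of it. Your version is longer but more transparent, and it makes explicit exactly which extremal configuration of $W$ realizes the optimum; the paper's version is shorter and defers the real content to a named theorem. Both are valid, and you yourself note that the black-box citation is an available shortcut. One small caveat that applies equally to the paper and to you: the paper labels the $\lambda_i$ ``singular values'' of the data while using them as covariance eigenvalues (i.e.\ coordinate variances in the principal basis); you silently adopt the same convention, which is fine for internal consistency but worth being aware of.
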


\begin{proof}
The first statement follows from~\eqref{eq:trace}, taking in account that the trace of a matrix is invariant under the change of the basis. Therefore, we can apply a singular transform to $\bar{X}$ and obtain 
\begin{multline*}
    \tr(\pr_i(\bar{X})^T\pr_i(\bar{X})) = \\ = \tr(\pr_i(\mathrm{diag}(\lambda_1, \dots,\lambda_d))) = \lambda_i.
\end{multline*}

The second statement follows from the Frobenius theorem, which says that for any matrix $\bar{X}$ the projection of its rows to the first $k$ singular components leads to the best rank-$k$ approximation with respect to Frobenius norm:
$$
\langle \bu_1,\dots, \bu_k\rangle = \argmin_{S, \dim S = k} \sum\limits_{\x\in \bar{X}} \| \x - \pr_S(\x) \|^2,
$$
where the sum goes over rows of $\bar{X}$. This can be rewritten in terms of the residual subspace $R=\langle \bu_{k+1},\dots,\bu_d\rangle$, which is unambiguously defined as the orthogonal complement of $S=\langle \bu_1,\dots,\bu_k\rangle$:
\begin{multline*}
\langle \bu_{k+1},\dots,\bu_d\rangle = \argmin_{R, \dim R = d-k} \sum\limits_{\x\in \bar{X}} \| \pr_R(\x) \|^2 \\ =  \argmin_{R, \dim R = d-k} \EV^{\mathcal{D}}(R),
\end{multline*}
which completes the proof.
\end{proof}

As a corollary, PCA allows to find the $\alpha$-residual subspace for a given dataset $\mathcal{D}$, if it exists. Namely, we can select its singular values starting from the least until their relative sum exceeds $\alpha$. Then, the number of components in the sum is equal to the maximal subspace dimension, and the subspace spanned by the corresponding singular vectors provides the necessary subspace.

\subsection{Head-wise decomposition}\label{sec:appendix-headwise-decomposition}

\def\LN{\mathrm{LN}}
\def\MHA{\mathrm{MHA}}
\def\MLP{\mathrm{MLP}}

In our derivation of the form of head-wise flows, we follow the ideas proposed by \citet{gandelsman2023interpreting}. 
In the following, we consider Transformer blocks with post-layer-normalization, such as in BERT and RoBERTa models.
The transformation inside each layer can be written as
\begin{align}
\label{eq:post-ln}
    \hat{\z_l} &= \LN(\z_{l-1}+\MHA(\z_{l-1})), \\
    \z_l &= \LN(\hat{\z_l}+\MLP(\hat{\z_l})),\quad\text{where}
\end{align}
\begin{equation}
\label{eq:LN}
    \LN(\x) = \frac{\x-\bar{\x}}{\|\x-\bar{\x}\|^2},
\end{equation}
and $\bar{\x}=\frac{1}{d}\sum_{i=1}^{d}x_i$ is the mean of the components of a vector $\x$. The numerator of~\eqref{eq:LN} can be rewritten as a linear transform
\begin{equation}
\label{eq:LN_projection}
    \x-\bar{\x} = (\mathbf{I}-\frac{1}{d}\mathbf{1})\x = \Pi \x,
\end{equation}
where $\mathbf{I}$ is the identity matrix, $\mathbf{1}$ is the square matrix consisting of ones, and $d$ is the dimension of $\x$. Note that this transform is in fact an orthogonal projection to the hyperplane defined by the equation $x_1+\dots+x_d=0$. As all projections, $\Pi$ is idempotent:
\begin{equation}
\label{eq:Pi}
\Pi^2=\Pi.
\end{equation}
Applying~\eqref{eq:LN_projection} and~\eqref{eq:Pi} to~\eqref{eq:post-ln}, we can write a layer-wise linear decomposition for post-layer-norm Transformers:
\begin{multline}
    \label{eq:layer-post-ln}
    M(\z)=\alpha(\z)\Pi (\z_0) + \sum\limits_l \beta_l(\z) \Pi (\MLP(\hat{\z_l})) + \\ 
    + \sum\limits_l \gamma_l(\z) \Pi (\MHA(\z_{l-1})) = \\
    = \alpha(\z)\Pi (\z_0) + \sum\limits_l \beta_l(\z) \Pi (\MLP(\hat{\z_l})) + \\ 
    + \sum\limits_l \sum\limits_h \gamma_l(\z) \Pi (A^{l,h}(\z_{l-1})),
\end{multline}
where $\alpha$, $\beta$, $\gamma$ are input-dependent scalars, $\Pi$ is the projection transform~\eqref{eq:LN_projection}, and $A^{l,h}$ denotes attention head $h$ on layer $l$.

\section{Technical details of the experiments}\label{app:technical}

\subsection{Preprocessing and models}

For text preprocessing, we only replaced consecutive spaces, trailing spaces, and a newline characters with one space, as was done by \citet{tulchinskii2023intrinsic}.

For embeddings extraction, we used standard pretrained models from the HuggingFace\footnote{\url{https://huggingface.co/}} library: \texttt{roberta-base} (125M parameters), \texttt{microsoft/phi-2} (2.7B parameters), \texttt{bert-base-uncased} (110M parameters). We use each text sample as an input for chosen model and obtain the resulting embedding from the last layer of this model. We take the mean pooling of that embedding to decrease the dimensionality and get a vector of dimension $768$; this is our text feature vector.
    
For all further experiments with embeddings, we use the logistic regression model from the \emph{scikit-learn}\footnote{\url{https://scikit-learn.org/stable/}} package on the training subset with default parameters: \emph{lbfgs} solver, $L_2$ regularization coefficient $C = 1$, and maximum amount of iterations $\mathrm{max\_iter} = 100$. 

\subsection{Computational resources}

For all of our experiments we used two servers with the following computational resources:
\begin{itemize}
    \item 1 V100 16Gb GPU + 32 CPUs (Intel(R) Xeon(R) Gold 6151), 126GB RAM
    \item 2 V100 16GB GPUs + 64 CPUs (Intel(R) Xeon(R) Gold 6151), 252GB RAM
\end{itemize}

\subsection{Detailed experimental setup on GPT-3D}

For experiments on the GPT-3D dataset, we consider texts generated by either the \emph{davinci} or \emph{GPT-4-o} generator on the $i$th topic from the  list and the corresponding human-written texts on the same topic as one dataset, labeling the generated and human-written texts with ``0'' and ``1'' respectively. We use each text sample as an input for the RoBERTa model and take the mean-pooled embeddings to obtain a vector of dimension $d = 768$; this is our text feature vector.

We split the resulting dataset of these feature vectors into training and test subsets. %
We train logistic regression on the training subset and test the resulting classifier on the test subset of every other generator we have. The resulting accuracy values comprise the $i$th row of our resulting diagram. 
We repeat this process for every considered topic.

\subsection{Greedy search for embedding components}\label{sec:appendix_gready}

Recall that for these experiments, we chose two domains $\{D_1, D_2\}$, and train the classifier on subset $D_1$, using corresponding feature vectors of size $d$. To find the ``harmful'' subspace, we start to remove the components of these feature vectors one-by-one. First, we train the classifier with $0$-th component of the feature vector removed, than with $1$-st component removed and so on, up to the last $d$-th component, remembering, which component removal increases out-of-domain accuracy on $D_2$ the most (or decreases it the least). After finding that most ``harmful'' component, we remove it for good and repeat the process again for the vector of size $d - 1$ to see, which one from the remaining components is the most harmful (or least useful) now. We repeat this process until only one component remains in the feature vector. 

After this, we get a list of the removed components and correseponding accuracy scores. We remember a list of the components, removal of which gives the best OOD accuracy $D_1 \to D_2$. Then we repeat all the same, training classifier on $D_2$ and checking it's performance on $D_1$ to get the list of the components, removal of which gives the best OOD accuracy in the opposite direction, i.e. $D_2 \to D_1$.

Intersection of these lists is the final list of the components that we remove in this method. After removing it, we remain with a union of the best components that need to remain to get the best $D_1 \to D_2$ and $D_2 \to D_1$ scores, as described in Section~\ref{sec:removing}.

\begin{figure}[!t]\centering
\includegraphics[width=\linewidth]{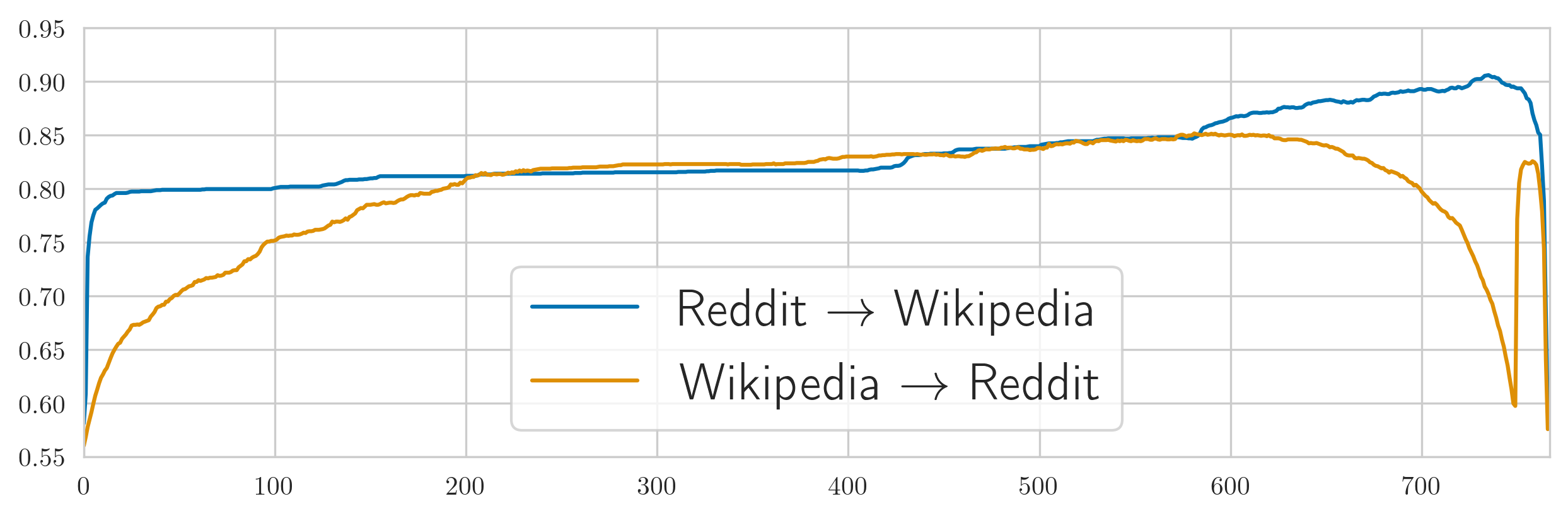}

\caption{Accuracy (vertical axis) as a function of the number of components removed from the RoBERTa embedding (horizontal axis). 
}\label{fig:reddit_wiki}
\end{figure}

\begin{figure}[!t]\centering
\includegraphics[width=\linewidth]{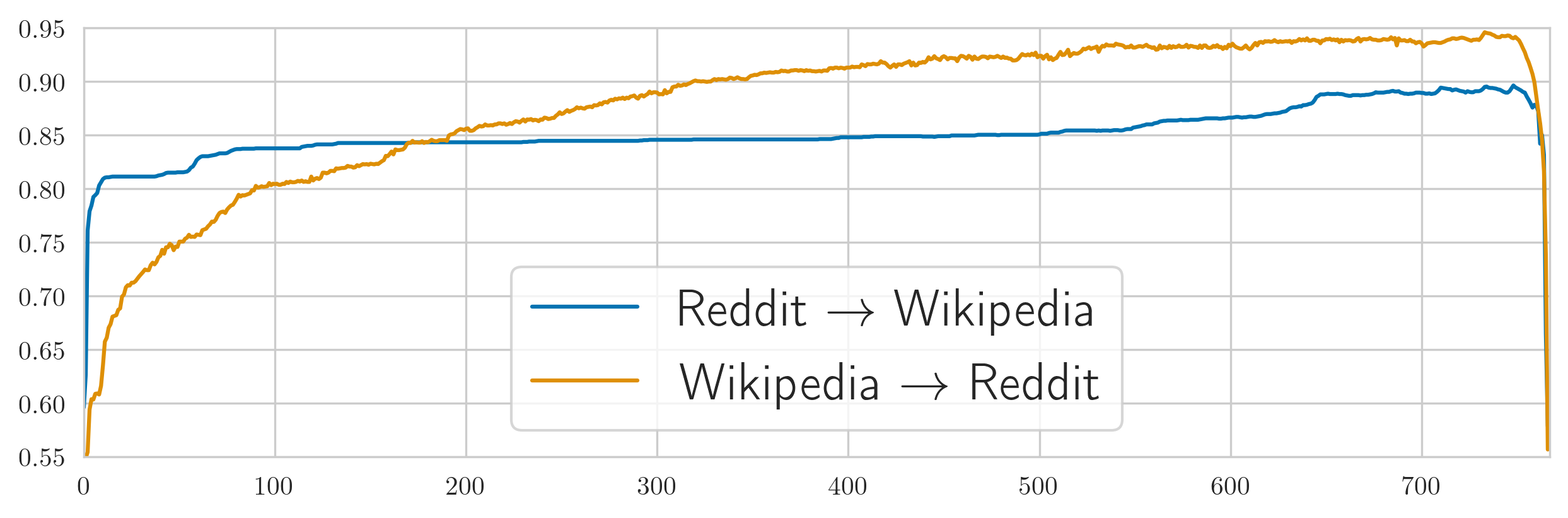}

\caption{Accuracy (vertical axis) as a function of the number of removed components (similar to Fig.~\ref{fig:reddit_wiki}) for data with all symbols except English letters, numbers, and ``!'', ``?'', ``,'', and ``.'' symbols filtered out.}
\label{fig:reddit_wiki_old}
\end{figure}

The resulting scores for greedy search of the embeddings components to remove, in both directions, $S_{\mathrm{Reddit} \to \mathrm{Wikipedia}}$ and $S_{\mathrm{Wikipedia} \to \mathrm{Reddit}}$, are shown in Figure~\ref{fig:reddit_wiki}. We also provide another similar plot in Figure~\ref{fig:reddit_wiki_old} in the setting where all symbols except English letters, numbers, and ``!'', ``?'', ``,'', and ``.'' symbols have been filtered out. This experiment shows that the text preprocessing method can significantly influence the process of choosing the best components.

\subsection{Layer-wise head pruning on GPT-3D dataset, exrtended with GPT-4 generations}

The GPT-3D dataset contains natural and artificially generated texts (by two models: GPT-3.5-davinci-003 and GPT-4-o) in three different domains: \emph{Wikipedia} articles, long-form question answering from \emph{Reddit} (general topics), and \emph{StackExchange} (more technical texts). For each (domain, generating model) pair, the dataset contains an equal number of generated and natural texts from that domain; therefore, classes are balanced in all settings. For each (domain, generating model) pair, we split the data into training and evaluation subsets in the 13:2 ratio. %
None of the evaluation subsets intersect with any of the training subsets.

Although our main track of research on our GPT-3D dataset was conducted using GPT-4-o data, we also generated a small sample of data by the earlier GPT-4 generator. This model is more expensive so the amount of data fit to our budget was not sufficient for a stable evaluation of all the proposed methods; but below we report interesting findings obtained by layer-wise head pruning. Table~\ref{tab:text_results_pruning_GPT4} demonstrates, that in this data-sparse regime the performance of OOD transfer of GPT-4 generations is low, but 1st layer pruning corrects it by as much as $16\%$. This observation does not correspond to the results obtained by GPT-4-o generations. Besides, the quality of cross-model transfer significantly improved. We believe that this observation requires an additional study with a larger GPT-4 dataset.

Below we described the detailed experimental setup for this study.

The experiment was conducted as follows: first, a classifier was trained on data for one (domain, generating model) pair and then evaluated on two other domains with the same generating model; we call this the OOD (out-of-distribution) setting. Then, the classifier is evaluated on all three domains but with a different generating model (Transfer). The results are presented in Table~\ref{tab:text_results_pruning_GPT4}, which reports average accuracy across all domains.

The first row of the table (Full) contains results obtained using the unaltered RoBERTa-base model.
Then we separately prune each layer of attention heads (``turn off'' all 12 attention heads of each layer by zeroing their output); this can be done, e.g., with the \texttt{prune\_heads} method of the RoBERTaModel class from the HuggingFace library. Results for these cases are reported in other rows of Table~\ref{tab:text_results_pruning_GPT4}.

Table~\ref{tab:text_results_pruning_GPT4} shows full results across the layers, indicating that pruning the lower layers of the model, especially Layer 0, yields better results.

\begin{table}[!t]
    \centering\setlength{\tabcolsep}{2pt}
    \resizebox{\linewidth}{!}{\setstretch{0.95}
    \begin{tabular}{lcccc}
        \toprule
             &    davinci & GPT-4 & davinci to GPT-4 & GPT-4 to davinci \\
            &    OOD & OOD              & transfer    & transfer \\
        \midrule
        Full model   & 81.3  & 64.3 & 66.4 & 70.4 \\
        \midrule
        \textbf{Pruned layer} & &  &  & \\
        \#0  & {83.2} & \textbf{80.1} & \textbf{80.0} & \textbf{83.2} \\
        \#1  & 83.4 & \underline{78.8} & \underline{74.7} & \underline{79.2} \\
        \#2  & 82.1 & \underline{78.8} & 72.7 & 77.4 \\
        \#3  & 81.8 & 82.0 & 73.6 & 78.1 \\
        \#4  & 83.4 & 79.6 & 71.6 & 76.4 \\
        \#5  & 82.2 & 78.1 & 72.9 & 75.6 \\
        \#6  &  \underline{84.0} & 76.3 & 72.3 & 74.4 \\
        \#7  & 82.8 & 75.4 & 70.1 & 74.6 \\
        \#8  & 82.8 & 72.1 & 68.5 & 73.4 \\
        \#9  & 83.2 & 73.2 & 68.7 & 71.4 \\
        \#10 & 83.1 & 71.0 & 68.1 & 72.8 \\
        \#11 & \textbf{86.6} & 68.2 & 67.3 & 71.7 \\
        \bottomrule
    \end{tabular}
    }
    \caption{Average accuracy of artificial text detection over three domains (Wikipedia, Reddit, StackExchange) and two generating models (GPT3.5-davinci and GPT4). Detector is trained on one domain against one generator and evaluated on other domains (OOD) and on all domains against unseen generating model (transfer). Best results are given in bold, runner-ups are underlind.}
    \label{tab:text_results_pruning_GPT4}
\end{table}

\subsection{Concept erasure on SemEval}

Table~\ref{tab:text_results_concepts} reports detailed results on concept erasure on the SemEval dataset. For concept erasure we use an open-source implementation\footnote{\url{https://github.com/EleutherAI/concept-erasure}}.

\begin{table}[!t]\small\centering
\setlength{\tabcolsep}{3pt}
{
\begin{tabular}{lcccccc}
\toprule
        & \multicolumn{3}{c}{Cross-domain}              & \multicolumn{3}{c}{Cross-model}              \\
        & Avg           & Max $\uparrow$  & Max $\downarrow$  & Avg           & Max $\uparrow$ & Max $\downarrow$  \\
        \midrule
Roberta & 73.0          & -             & -             & 82.8          & -            & -             \\
Bshift    & 73.0          & 6.4           & -6.8          & 82.2          & 1.5          & -2.6          \\
CoordInv  & 72.1          & 1.1           & -3.7          & 82.1          & 0.9          & -3.4          \\
ObjNum    & 72.9          & 0.9           & -1.5          & 83.0          & 0.7          & \textbf{-0.0} \\
SOMO      & 72.9          & 6.8           & -3.8          & 82.1          & 0.6          & -4.1          \\
Tense     & 72.7          & 0.4           & -1.6          & 82.8          & 1.0          & -0.4          \\
SentLen   & 73.0          & 4.1           & -3.0          & 82.6          & 0.2          & -1.2          \\
SubjNum   & 72.8          & 0.4           & -1.6          & 82.9          & 0.5          & -0.4          \\
TopConst  & \textbf{75.1} & \textbf{12.6} & -1.8          & \textbf{83.1} & 2.2          & -0.9          \\
TreeDepth & 73.9          & 12.1          & \textbf{-1.4} & 83.0          & 1.0          & -0.3          \\
WC        & 74.1          & 11.0          & -8.5          & 83.0          & \textbf{2.9} & -2.9     \\
\bottomrule
\end{tabular}
}
\caption{Balanced accuracy results for out-of-domain classification for different erased concepts on SemEval}
\label{tab:text_results_concepts}
\end{table}

\section{Cross-dataset transfer}
\label{sec:appendix_cross_dataset}

\begin{table}[!t]\small\centering
\setlength{\tabcolsep}{1.7pt}
\begin{tabular}{L{.3\linewidth}C{\mypwid}C{\mypwid}C{\mypwid}C{\mypwid}C{\mypwid}}
\toprule
                      & \multicolumn{2}{c}{\textbf{SemEval}} & \multicolumn{3}{c}{\textbf{GPT-3D}} \\
RoBERTa & CD & CM  & CD & CM & CA \\\midrule
Baseline      &   73.0 / 76.4*     &     82.8 / 76.3*    &   84.1      &   71.0  &  70.1  \\\midrule
Selected heads       &   74.3 / 75.6*    &     80.0 / 75.4* &   86.6      &    79.3    &  79.2  \\\midrule
Selected coordinates &   74.5 / 75.4*  &     82.6 / 75.3*    & 85.4 & 71.9 &  72.8 \\\bottomrule
\end{tabular}
\caption{Balanced accuracy for OOD classification: cross-domain (CD), cross-model (CM), cross-all (CA).  Numbers with asterisks correspond to cross-dataset transfer. }
\label{tab:cross_dataset}
\end{table}

Table~\ref{tab:cross_dataset}
compares the classifiers trained on SemEval dataset with the same setup trained on GPT-3D data, but tested on SemEval. Surprisingly, in cross-domain transfer heads and coordinates selection on GPT-3D leads to an improvement of the performance on SemEval. However, the cross-model performance degrades.

\section{Assessing combination of methods}
\label{app:combination}

To investigate the effectiveness of combining methods, we conducted experiments where multiple techniques were applied simultaneously. The results, presented in Table~\ref{tab:combination}, show that the combined approach does not result in any significant improvement. The joint outcomes are either worse or approximately the same as the best individual component. Removing multiple layers simultaneously is particularly detrimental, whereas concept removal can be combined with other methods more effectively.

\begin{table}[!t]\small\centering
\setlength{\tabcolsep}{3pt}
{
\begin{tabular}{lllll}
\toprule
Method    & CM   & Combination        & CM   &  \\
\midrule
Baseline  & 81.9 & L1 + L4            & 73.4 &  \\
Layer 1   & 77.8 & L1 + L4            & 73.4 &  \\
Layer 4   & 78.9 & TreeDepth+TopConst & 83.0 &  \\
TopConst  & 81.4 & L1+TreeDepth       & 78.4 &  \\
TreeDepth & 83.2 & TreeDepth+Coord    & 89.5 &  \\
Coord     & 89.5 & L1+TreeDepth+Coord & 88.2 &  \\
\bottomrule
\end{tabular}
}
\caption{Result for BERT model, GPT-3D dataset, cross-model setup.}
\label{tab:combination}
\end{table}

\section{Confidence intervals for SemEval}
\label{sec:confidence_intervals}

Since the SemEval dataset is more diverse than GPT-3D, we present the results using averaged statistics. For more detailed information, we report confidence intervals for accuracy changes on the SemEval dataset in Table~\ref{tab:confidence}. We observe that these intervals are predominantly positive, showing improvements of up to 6\% in cross-domain setups.

\begin{table}[!t]\small\centering
\setlength{\tabcolsep}{3pt}
{
\begin{tabular}{lll}
\toprule
          & \textbf{Cross-Domain} & \textbf{Cross-Model} \\
\midrule
Layer 1   & (0.77, 6.65)          & (-1.47, 0.68)        \\
Layer 4   & (0.88, 3.18)          & (0.4 , 1.68)         \\
TopConst  & (0.55, 4.10)          & (-0.06, 0.7)         \\
TreeDepth & (-0.24, 2.62)         & (-0.07, 0.31)        \\
Coords    & (-0.80 , 5.76)        & (-0.98, 1.19)        \\
\bottomrule
\end{tabular}
}
\caption{Confidence intervals for accuracy changes in SemEval using RoBERTa model}
\label{tab:confidence}
\end{table}

\section{Removing ``bad'' outliers and how it influences the geometry of embeddings}

\begin{figure}[!t]\centering
\includegraphics[width=\linewidth]{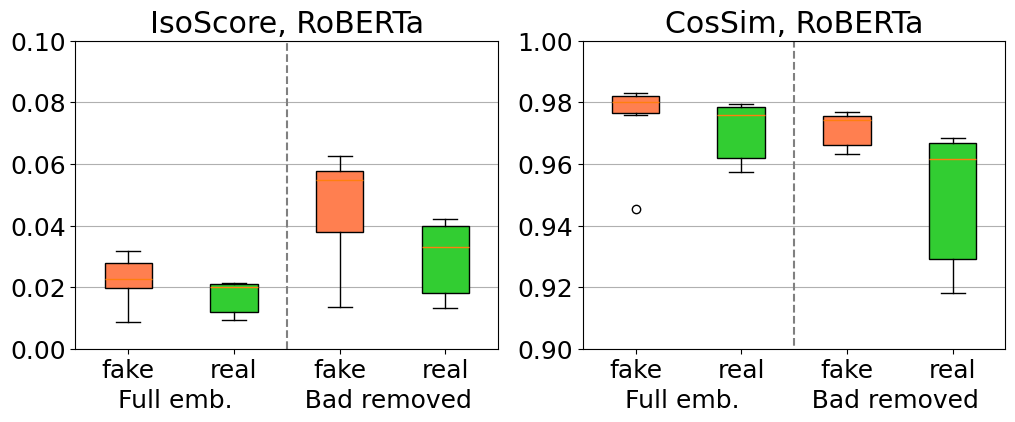}

\caption{IsoScore and cosine similarity of the RoBERTa embeddings before and after removing their ``bad'' components; the embeddings were calculated on GPT-3D dataset.}\label{fig:iso_RoBERTa}
\end{figure}

Previous studies have shown that some dimensions skew the embedding space greatly and have a dramatic influence on its geometry. In particular, \citet{timkey-van-schijndel-2021-bark} have shown that the embeddings of BERT, RoBERTa, and some other Transformer-based models lie in a narrow cone. To show this, they use the mean cosine similarity of the embeddings: if the cosine similarity of all embeddings is high, it means that they are similar to each other along some dimensions; the larger the average cosine similarity, the less isotropic the embedding space is.

\citet{rudman-etal-2022-isoscore} introduced a more complex tool for measuring the anisotropy of the embedding space: IsoScore. The fundamental motivation for IsoScore is that it roughly reflects the fraction of dimensions uniformly utilized by a given point cloud. According to the authors' estimation, less than 20\% of dimensions of the BERT model embedding space are utilized uniformly. Larger IsoScore values correspond to more isotropic embedding spaces.

Figure~\ref{fig:iso_RoBERTa} shows how removing the components that are ``bad'' for cross-domain and cross-model generalization abilities influences the IsoScore and cosine similarity scores for RoBERTa embeddings.

We see that after removing these ``bad'' dimensions, the embeddings of fake and real texts change their isotropy in different rates, but both become more {isotropic} in general. Based on this observation, we hypothesize that the isotropy of the embedding space can be connected to the model's generalization abilities; we leave testing this hypothesis for future research.

\section{Components removal and PHD}
\label{sec:appendix_phdim}

We conducted additional experiments to evaluate the influence of removing embedding components (selected with the greedy search outlined in Section ~\ref{sec:removing} Subspace removing methods) in the RoBERTa and BERT models on the cross-domain and cross-model generalization abilities of the persistent homological fractal intrinsic dimensionality-based method. Figure~\ref{fig:phdim_gpt3d} shows a consistent decrease in accuracy for both cross-model and cross-domain ATD as components are being removed. Such removal typically reduces the intrinsic dimensionality of human-written texts, hence degrading the discriminative power of linear classifiers in ATD. 

An interesting observation is that the PHD of a newer generation LLM (GPT-4o) is higher than that of human-written texts, while the PHD of the older generation (GPT-3.5-davinci) is lower that of human-written texts. This may explain the poor generalization ability between the models on GPT-3.5-davinci and GPT-4o. See Figure~\ref{fig:phdim_statistics} for details.

\begin{figure}[!t]
    \setlength{\tabcolsep}{0pt}
    \def\mywid{0.49\linewidth}
    \begin{tabular}{cc}\centering
        \includegraphics[width=\mywid]{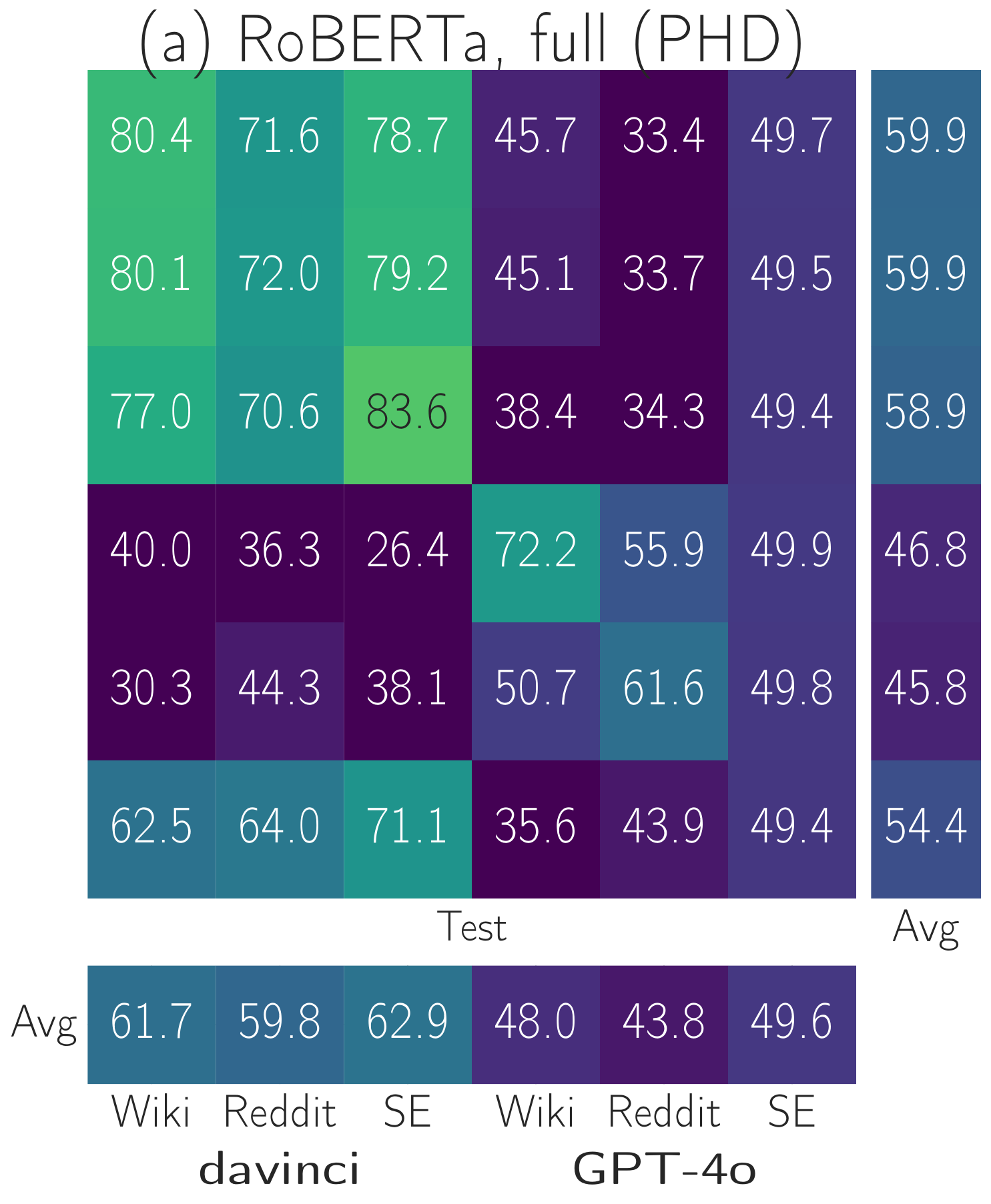} &
        \includegraphics[width=\mywid]{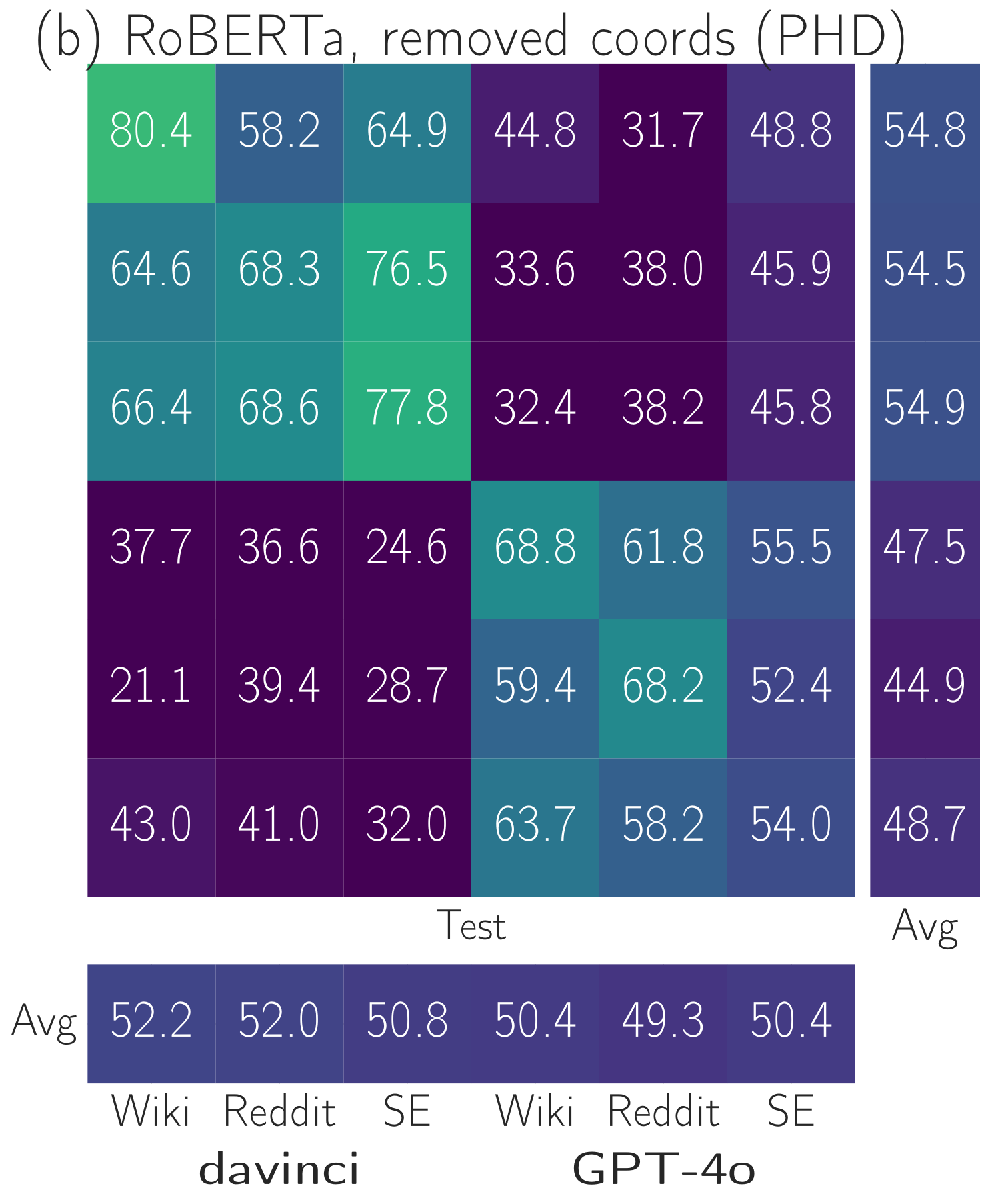} \\
        \includegraphics[width=\mywid]{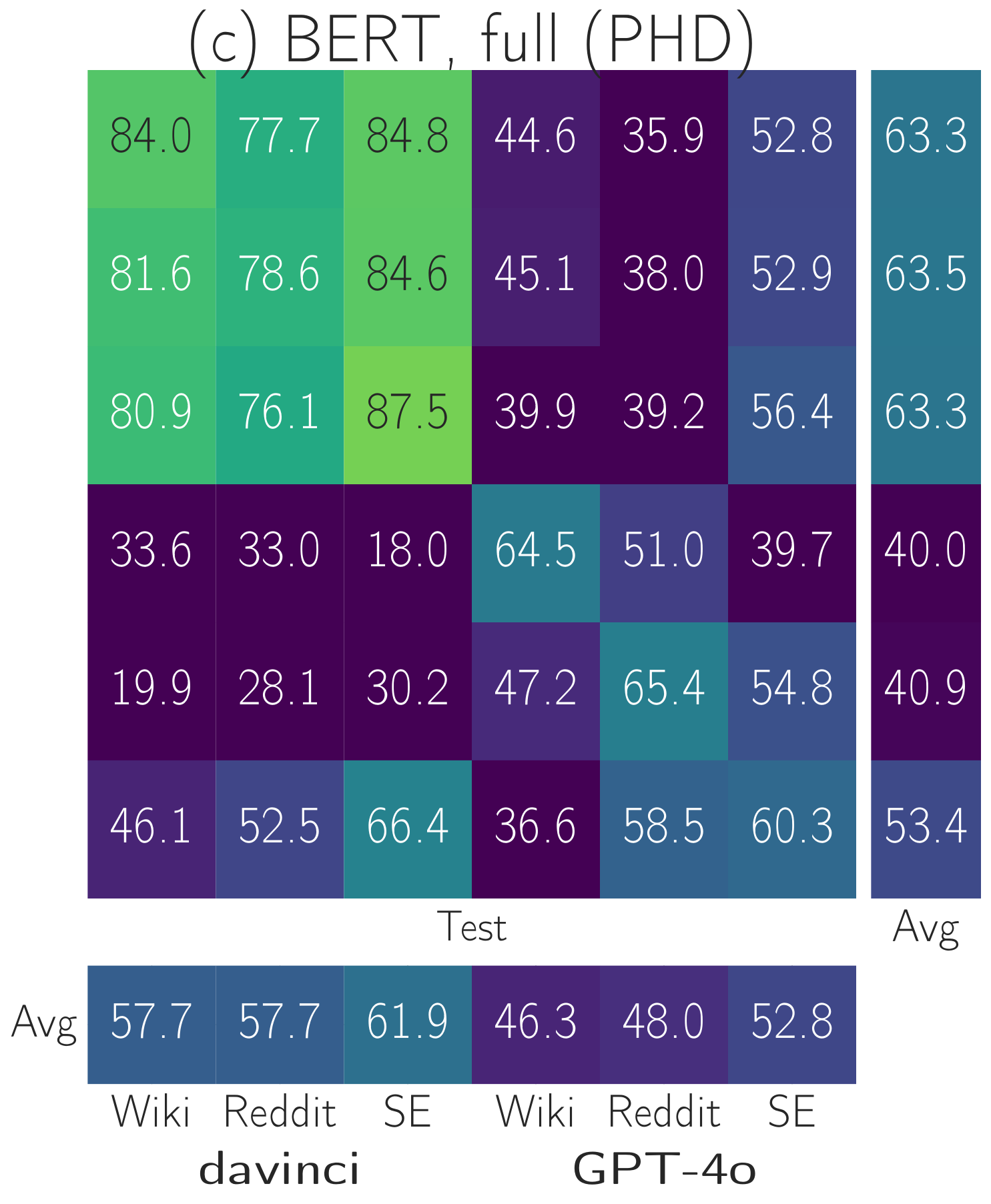} &
        \includegraphics[width=\mywid]{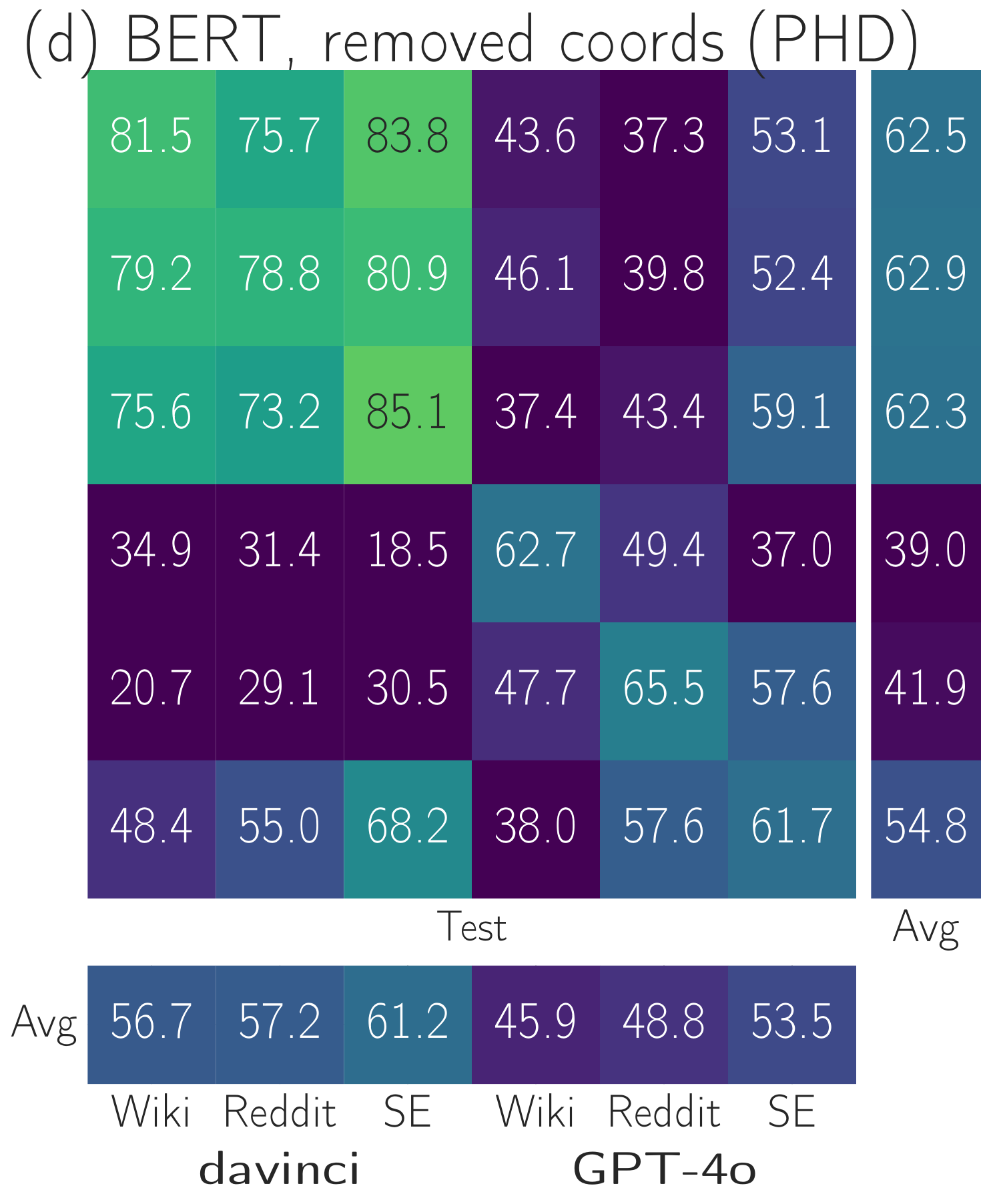}
    \end{tabular}
 
   \caption{PHD-based logistic regression accuracy before and after components removal, mean accuracy in cross-domain/cross-model ATD on GPT-3D: (a)~RoBERTa, full embeddings, (b)~RoBERTa after components removal, (c) BERT, full embeddings, (d) BERT, after components removal.}
   \label{fig:phdim_gpt3d}
\end{figure}

\begin{figure*}[!t]
   \centering
   \includegraphics[width=\textwidth]{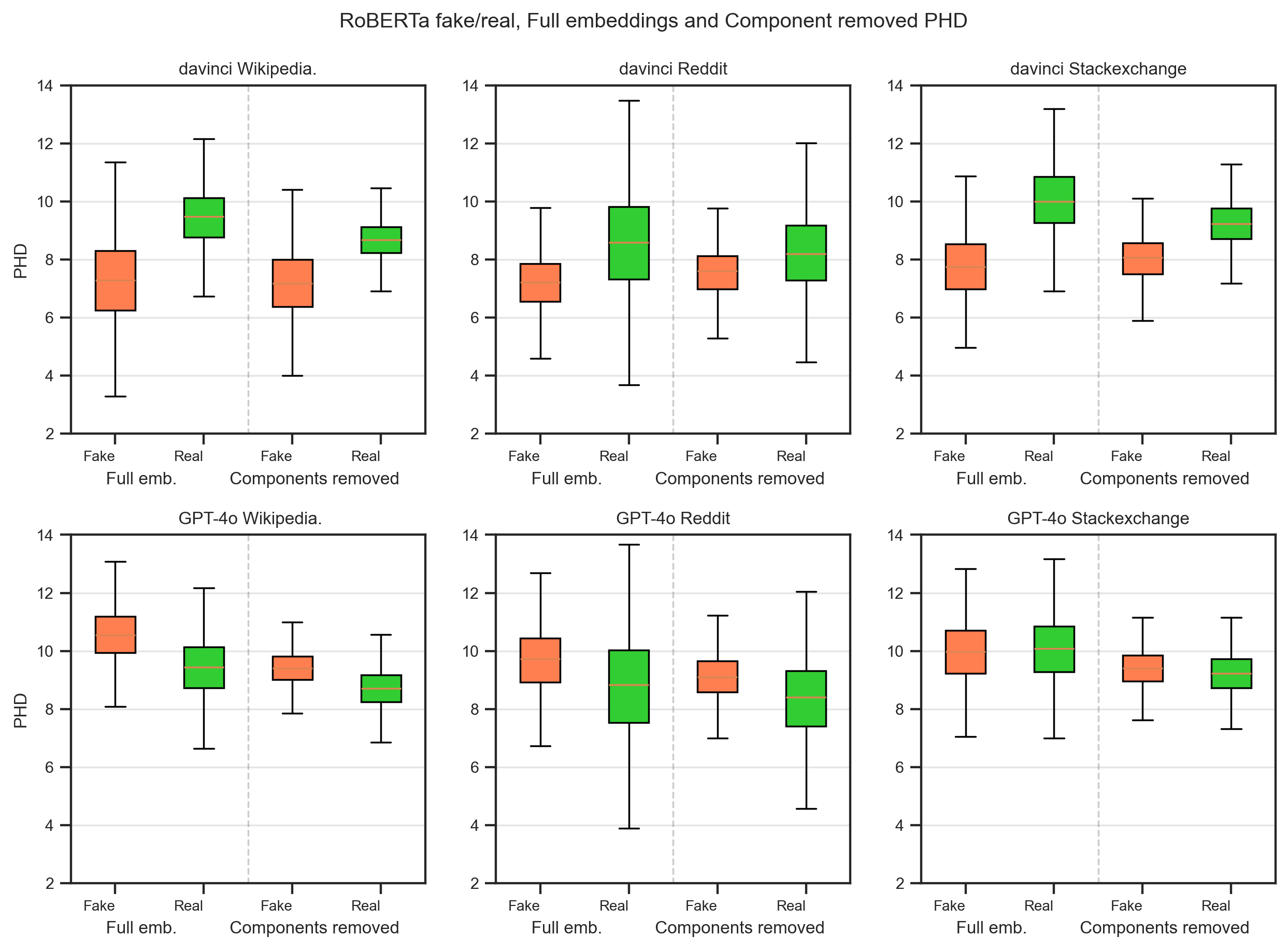}
   \caption{PHD of RoBERTa full embeddings and embeddings after component removal for real/fake texts from the GPT-3D dataset.}
   \label{fig:phdim_statistics}
\end{figure*}

\begin{figure*}[!t]
   \centering
   \includegraphics[width=\textwidth]{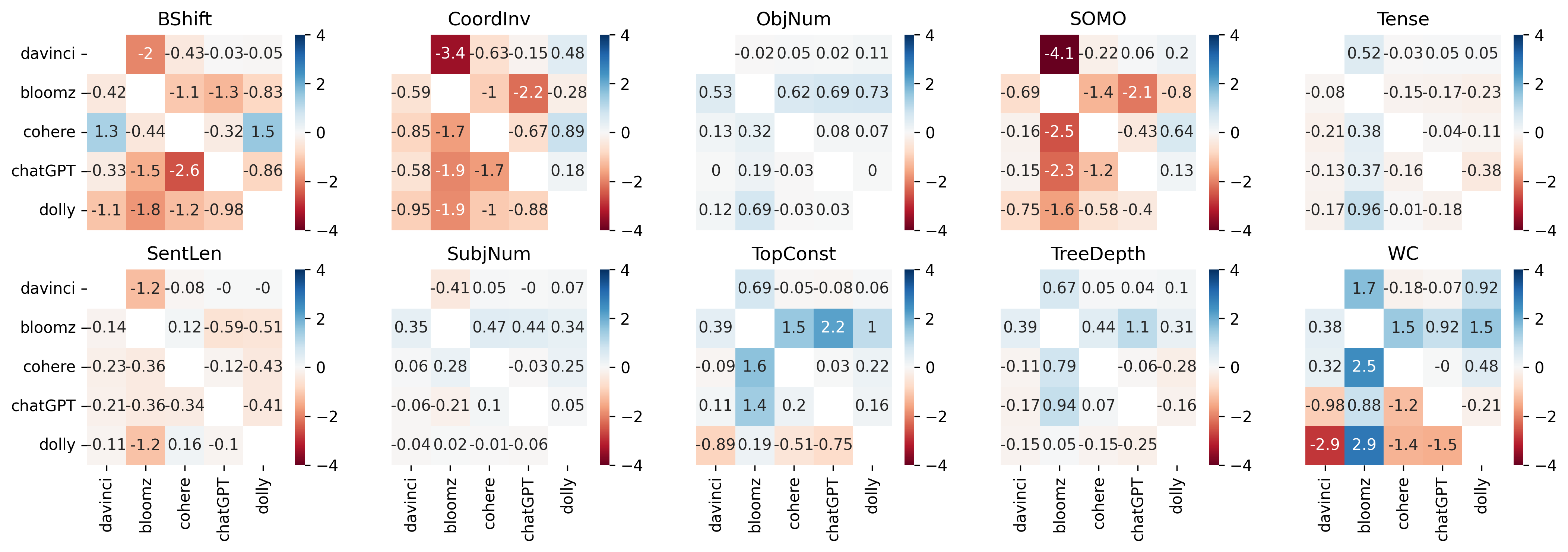}
   \caption{Concept erasure, cross-model setting}
   \label{fig:concept_erasure_cm}
\end{figure*}

\begin{figure*}[!t]
   \centering
   \includegraphics[width=\textwidth]{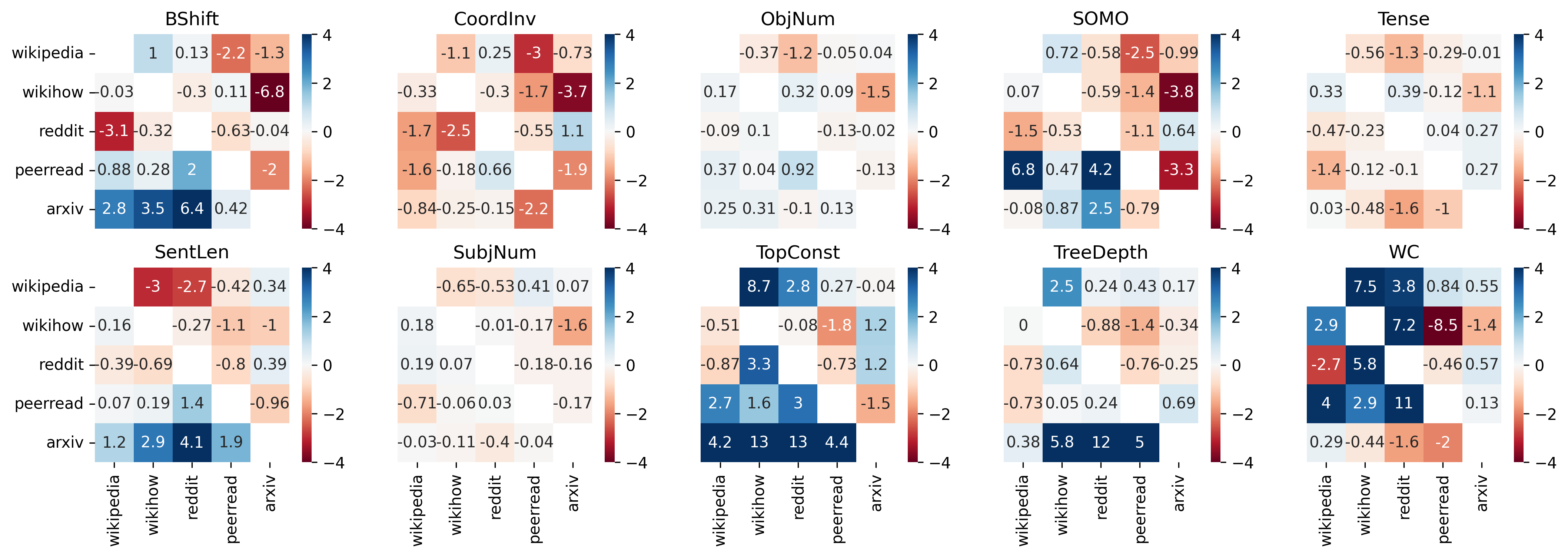}
   \caption{Concept erasure, cross-domain setting}
   \label{fig:concept_erasure_cd}
\end{figure*}

\begin{figure*}[!t]
   \centering
   \includegraphics[width=0.45\textwidth]{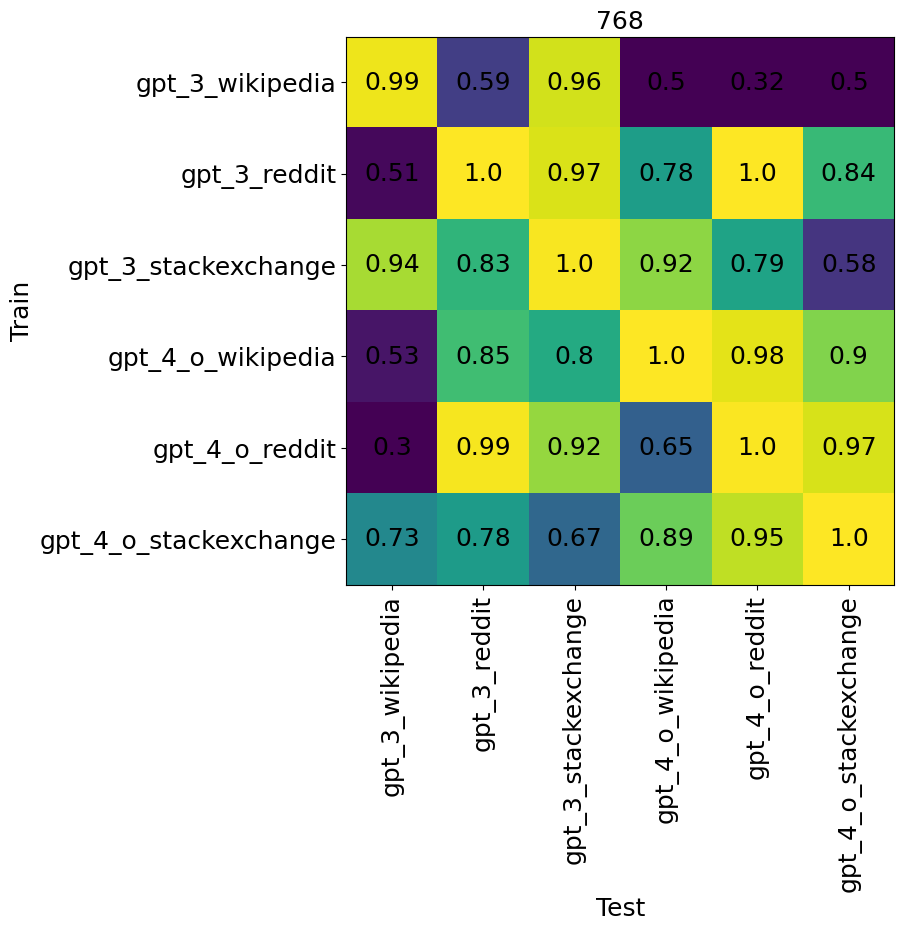}
      \includegraphics[width=0.45\textwidth]{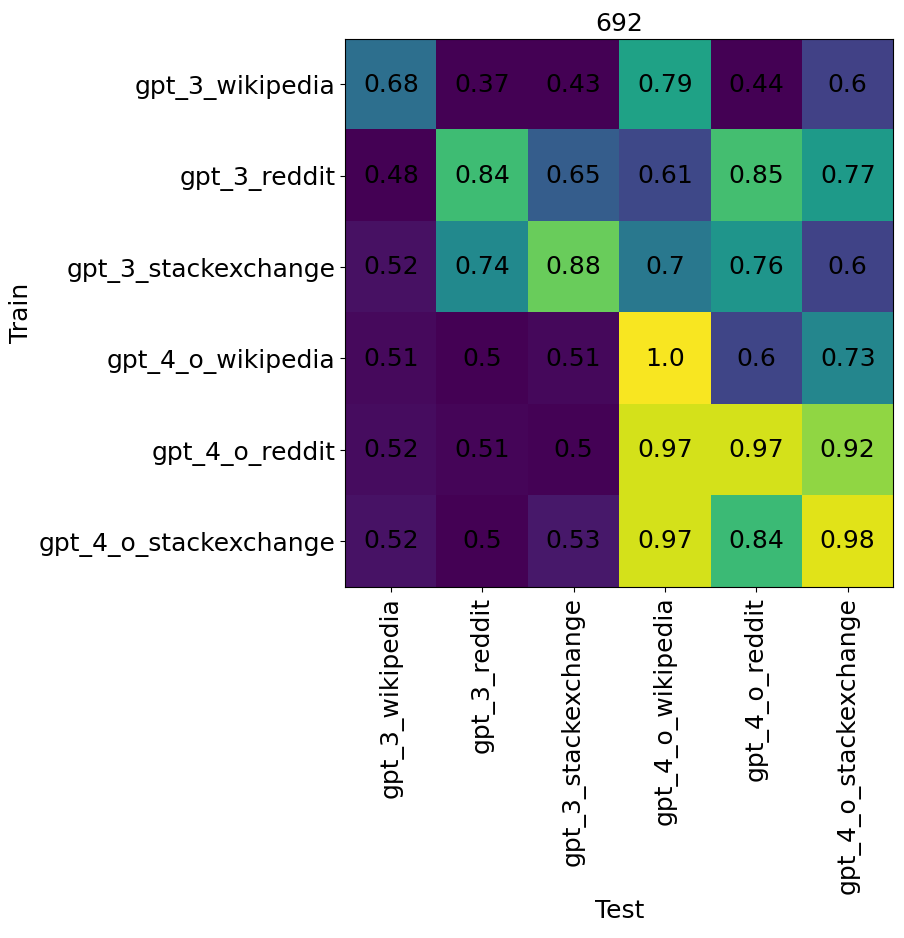}
      \includegraphics[width=0.45\textwidth]{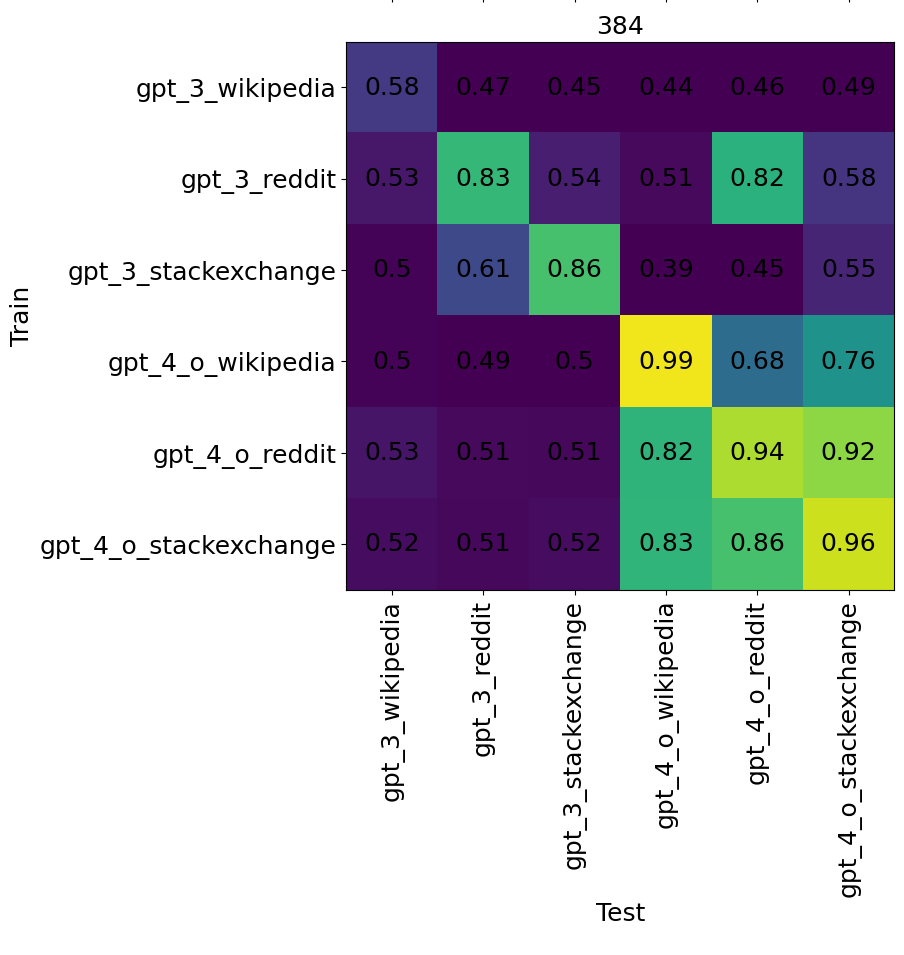}
         \includegraphics[width=0.45\textwidth]{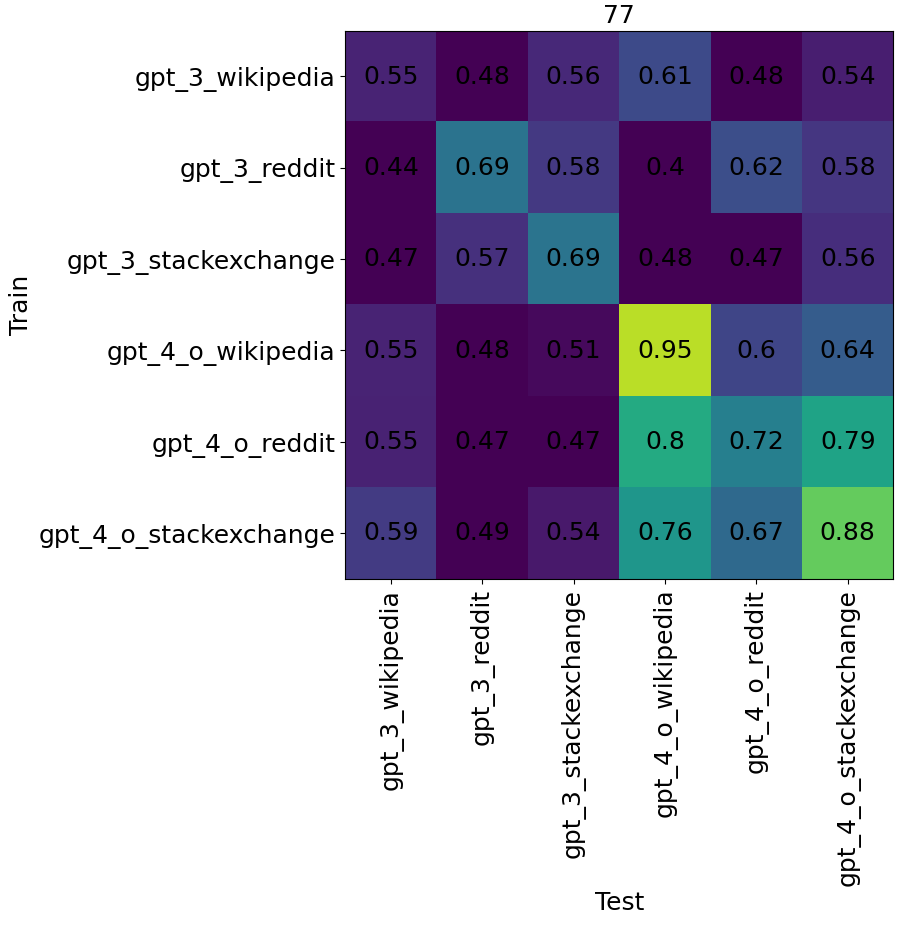}
   \caption{Classification quality on PCA components of RoBERTa embeddings on the GPT-3D dataset. Top left~--- all components are present; top right~--- 10\% of the components with the largest variance are removed; bottom left~--- 50\% of the components with the largest variance are removed; bottom right~--- 90\% of the components with the largest variance are removed.}
   \label{fig:pca_transfer}
\end{figure*}

\begin{figure*}[!t]
   \centering
   \includegraphics[width=.8\textwidth]{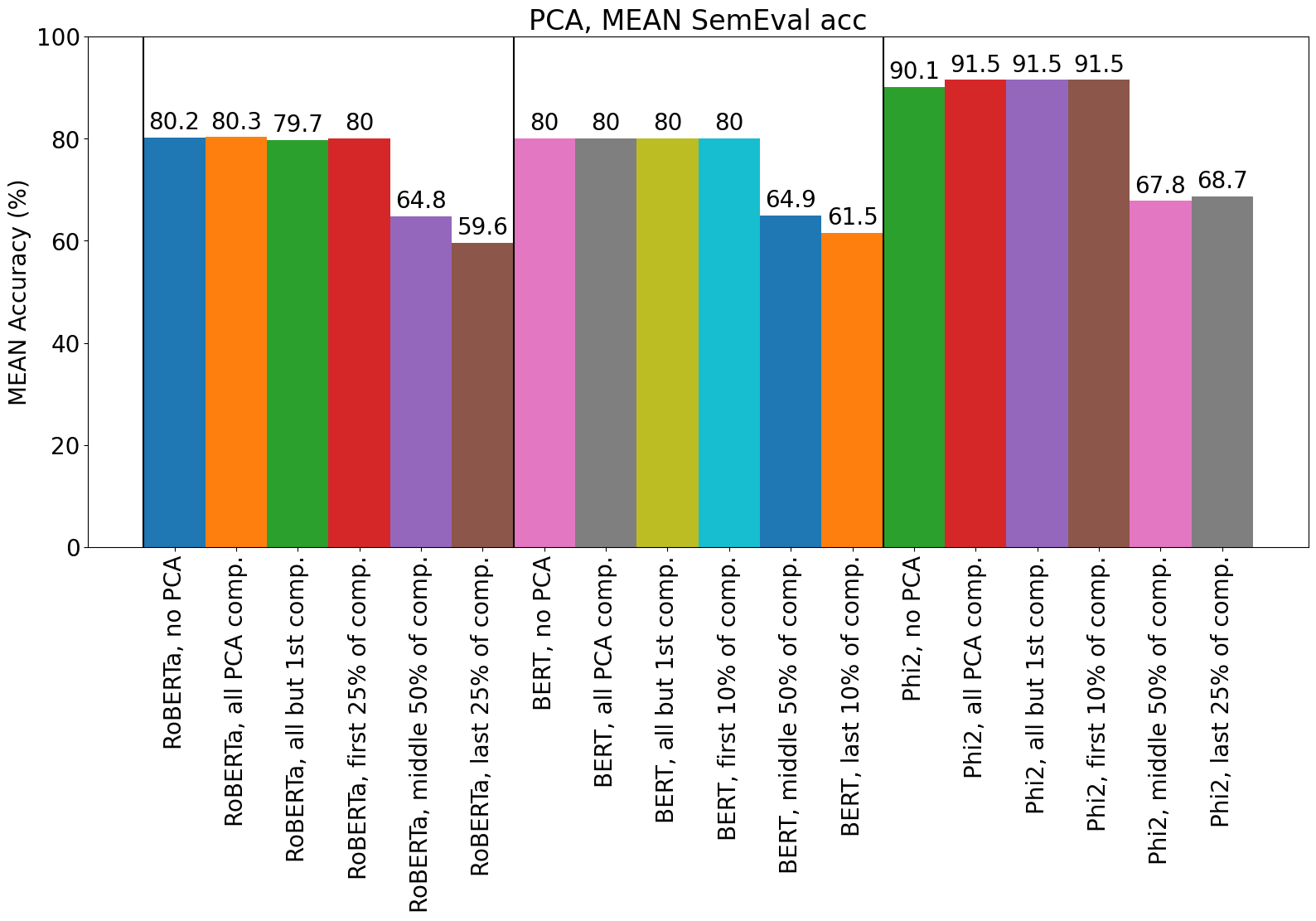}
   \caption{Mean accuracy on the GPT-3D dataset, depending on the number of PCA components left; e.g., ``top 10\% components'' means that we have removed 90\% of the components with the smallest variance.}
   \label{fig:pca_acc_bars}   
\end{figure*}

\section{PCA}
\label{sec:appendix_pca}

We investigated the PCA decomposition of the embedding spaces of RoBERTa, BERT and Phi-2. We tried to remove components with highest and lowest variance to check how it affects the overall accuracy and generalization abilities of the models. The results are shown in Figures~\ref{fig:pca_transfer} and~\ref{fig:pca_acc_bars}.

Figure~\ref{fig:pca_transfer} shows that while we remove PCA components of the RoBERTa embedding space with the largest variance, the transferability between the different domains and models drops significantly. At first, the transferability from GPT-4o to GPT-3.5-davinci goes down to random; next, transferability between different domains of texts generated with GPT-3.5-davinci goes down to random; and finally, transferability between GPT-3.5-davinci and GPT-4o drops down. Interestingly, transferability between different domains of GPT-4o remains significantly higher than random even after removing 90\% of the high-variance components.

Figure~\ref{fig:pca_acc_bars} shows that removing the first PCA component with the highest accuracy does not affect the classification quality much, suggesting that it does not play a distinct role in classification. However, removing 25\% of the components with high variance is damaging for all three models, while removing the components with low or average variance does not hurt the model performance.

Overall, we see that high-variance components in the PCA space generally play some important role in the generalization ability of all three models (RoBERTa, BERT, and Phi-2); however, we have not been able to significantly improve the quality of classification by simply removing low-variance PCA components on any model. 

\section{Datasets license}

We release our dataset under CC BY-SA 4.0 licence agreement. For the information about the licence of M4 (SemEval) subsets, see original paper by \citet{wang-etal-2024-m4}.

\hfill

\end{document}